\title{What Representational Similarity Measures Imply about Decodable Information}
\author{%
  Sarah E. Harvey\\
  Center for Computational Neuroscience\\
  Flatiron Institute, New York, NY\\
  \texttt{sharvey@flatironinstitute.org}
  \And
  David Lipshutz\\
  Center for Computational Neuroscience\\
  Flatiron Institute, New York, NY\\
  \texttt{dlipshutz@flatironinstitute.org}
  \And
  Alex H. Williams\\
  Center for Neural Science, Center for Computational Neuroscience\\
  New York University, Flatiron Institute, New York, NY\\
  \texttt{alex.h.williams@nyu.edu}
}
\begin{document}

\maketitle

\begin{abstract}
Neural responses encode information that is useful for a variety of downstream tasks.
A common approach to understand these systems is to build regression models or ``decoders'' that reconstruct features of the stimulus from neural responses.
Popular neural network similarity measures like centered kernel alignment (CKA), canonical correlation analysis (CCA), and Procrustes shape distance, do not explicitly leverage this perspective and instead highlight geometric invariances to orthogonal or affine transformations when comparing representations.
Here, we show that many of these measures can, in fact, be equivalently motivated from a decoding perspective.
Specifically, measures like CKA and CCA quantify the average alignment between optimal linear readouts across a distribution of decoding tasks. 
We also show that the Procrustes shape distance upper bounds the distance between optimal linear readouts and that the converse holds for representations with low participation ratio.
Overall, our work demonstrates a tight link between the geometry of neural representations and the ability to linearly decode information.
This perspective suggests new ways of measuring similarity between neural systems and also provides novel, unifying interpretations of existing measures. 
\end{abstract}

\section{Introduction}
\label{sec:intro}

The computational neuroscience and machine learning communities have developed a multitude of methods to quantify similarity in population-level activity patterns across neural systems.
Indeed, a recent review by \textcite{klabunde2023similarity} catalogues over thirty approaches to quantifying representational similarity.
Many of these measures quantify similarity in the \textit{shape} or \textit{representational geometry} of point clouds.
For example, recent papers (e.g. \cite{williams2021generalized,Ding2021}) leverage the Procrustes distance and other concepts from \textit{shape theory}---an established body of work that formalizes the notion of shape and ways to measure distance between shapes~\cite{kendall2009shape,dryden2016statistical}.
Other measures of representational similarity are not quite this explicit but still emphasize desired \textit{geometric invariance} properties.
For example, work by \textcite{kornblith2019} popularized centered kernel alignment (CKA) by emphasizing its invariance to isotropic scaling, translation, and orthogonal transformations.
These are precisely the same invariances specified by classical shape theory~\cite{kendall2009shape,dryden2016statistical}.
Earlier work by \textcite{raghu2017svcca} argued for a more flexible invariance to affine transformations, and they used canonical correlations analysis (CCA) for this purpose.
Contemporary work in neuroscience is replete with similar geometric reasoning and quantitative frameworks~\cite{Kriegeskorte2021}.

Here we ask: What does the similarity between neural representations, potentially in terms of shape or geometry, imply about the similarity between \emph{functions} performed by those neural systems?
Potentially, not very much.
\textcite{Maheswaranathan2019} showed that recurrent neural networks with different architectures performed the same task with similar dynamical algorithms, but with different representational geometry.
More recently, \textcite{lampinen2024learned} showed that representational geometry can be biased by other nuisance variables, such as the order in which multiple tasks are learned.
Similar limitations of representational geometry measures are discussed in~\cite{dujmovic2022some,davari2023reliability}.

On the other hand, several basic observations suggest that geometry and function ought to be related.
Consider, for example, the common practice of using linear models to decode task-relevant variables from neural population activity~\cite{alain2017understanding,Kriegeskorte2019}.
The premise behind these analyses is that anything linearly decodable from system $A$ is readily accessible to layers or brain regions immediately downstream of $A$.
Therefore, information that is linearly decodable from $A$ may relate to functional role played by $A$ within the overall system \cite{Cao2022}.
Notably, the invariances of typical representational similarity measures---translations, isotropic scalings, and orthogonal transformations---closely coincide with the set of transformations that leave decoding accuracy unchanged.
For example, translations and rotations of neural population activity will not impact the accuracy of a linear decoder with an intercept term and an $L_2$ penalty on the weights.
Thus, if we accept the premise that decoding accuracy is a proxy---even, perhaps, an imperfect proxy---for neural system function, then representational geometry may be a reasonable framework to capture something about this function~\cite{kriegeskorte2019peeling}.

We provide a unifying theoretical framework that connects several existing methods of measuring representational similarity in terms of linear decoding statistics.  
Specifically, we show that popular methods such as centered kernel alignment (CKA) \cite{kornblith2019} and similarity based on canonical correlations analysis (CCA) \cite{raghu2017svcca} can be interpreted as alignment scores between optimal linear decoders with particular weight regularizations.  
Additionally, we study how the \emph{shape} of neural representations is related to decoding by deriving bounds that relate the average decoding distance and the Procrustes distance.  
We find that the Procrustes distance is a more 
strict notion of geometric dissimilarity than the expected distance between optimal decoder readouts, in that a small value of the Procrustes distance implies a small expected distance between optimal decoder readouts but the converse is not necessarily true.  
This formalizes recent observations by \textcite{cloos2024differentiable}, who found that high Procrustes similarity implied high CKA similarity in practical settings.

\section{Theoretical Framework for Linear Decoding}
\label{sec:setup-linear-decoding}

We use $\mbX \in \reals^{M \times N_X}$ and $\mbY \in \reals^{M \times N_Y}$ to denote matrices holding responses to $M$ stimulus conditions measured across neural populations consisting of $N_X$ and $N_Y$ neurons, respectively.
Throughout this paper we will assume that the neural responses have been preprocessed so that the columns of $\mbX$ and $\mbY$ have zero mean.

\subsection{Linear decoding from neural population responses}

We first consider the problem of predicting (or ``decoding'') a target vector $\mbz \in \reals^M$ from neural population responses by a linear function $\mbX \mapsto \mbX \mbw$ where $\mbw \in \reals^{N_X}$.
One can view this as a simplified neural circuit model where each element of $\hat{\mbz} = \mbX \mbw + \mb{\epsilon}$ is the firing rate readout unit the $M$ conditions.
Here, we interpret $\mbw$ as a vector of synaptic weights and the random vector $\mb{\epsilon}$ represents potential noise corruptions; for example, $\mb{\epsilon}_i \sim \cN(0, \sigma^2)$ where $\sigma^2 > 0$ specifies the scale of noise.
This neural circuit interpretation of linear decoding is fairly standard within the literature~\cite{fusi2016neurons,kaufman2014cortical}.

Within this setting, we formalize the problem of decoding $\mbz$ from the population activity $\mbX$ through the following class of optimization problems:
\begin{align}
\label{eq:readout-optimization-class}
\underset{\mbw}{\text{maximize}} \quad \tfrac{1}{M} \mbz^\top \mbX \mbw - \tfrac{1}{2} \mbw^\top \mbG(\mbX) \mbw
\end{align}
where $\mbG(\cdot)$ is a function mapping $\reals^{M \times N}$ onto symmetric positive definite $N \times N$ matrices.
The objective is concave, and equating the gradient to zero yields a closed form solution:
\begin{equation}
\label{eq:linear-readout-closed-form}
\mbw^* = \tfrac{1}{M} \mbG(\mbX)^{-1} \mbX^\top \mbz
\end{equation}
To appreciate the relevance of \cref{eq:readout-optimization-class}, note that if $\mbG(\mbX) =  \mbC_X + \lambda \mbI$, where $\mbC_X:=\tfrac1M\mbX^\top\mbX$ is the empirical covariance of $\mbX$, then the solution $\mbw^*$ coincides with the optimum of the familiar ridge regression problem:
\begin{equation}
\label{eq:ridge}
\underset{\mbw}{\text{minimize}} \quad \tfrac{1}{M} \Vert \mbz - \mbX \mbw \Vert_2^2 + \lambda \Vert \mbw \Vert_2^2
\end{equation}
with solution $\mbw^* = \frac{1}{M}(\mbC_X + \lambda \mbI)^{-1} \mbX^\top\mbz$, where $\lambda > 0$ specifies the strength of regularization on the coefficients.
More generally, we can interpret \cref{eq:readout-optimization-class} as maximizing the inner product between the target, $\mbz$, and the linear readout, $\mbX \mbw$, plus a penalty term on the scale of $\mbw$ as quantified by the function $\mbw \mapsto \sqrt{\mbw^\top \mbG(\mbX) \mbw}$, which is a norm on $\reals^N$.

\Cref{eq:readout-optimization-class} is a useful generalization of \cref{eq:ridge} because there are situations where the scale of the decoded signal matters.
For instance, when the readout unit is corrupted with noise, $\hat{\mbz} = \mbX \mbw + \mb{\epsilon}$ where $\mb{\epsilon}_i \sim \cN(0, \sigma^2)$, we can interpret \cref{eq:readout-optimization-class} as maximizing the signal-to-noise ratio subject to a cost on the magnitude of the  weights $\mbw$.

In most cases, we will consider choices of $\mbG(\cdot)$ that can be written as:
\begin{equation}
\label{eq:G-alpha-beta}
\mbG(\mbX) = a \mbC_X + b \mbI
\end{equation} 
for some $a,b \geq 0$.
Under this choice, the penalty on $\mbw$ equals the sum of two terms:
\begin{equation}\label{eq:G-a-b}
\mbw^\top \mbG(\mbX) \mbw = \frac{a}{M} \Vert \mbX \mbw \Vert_2^2 + b \Vert \mbw \Vert_2^2.
\end{equation} 
The first term, scaled by $a$, penalizes the activation level of the readout unit, which can be viewed as an energetic constraint.
The second term, scaled by $b$, penalizes the strength of the synaptic weights, which can be viewed as a resource constraint for building and maintaining strong synapses.
We note again that ridge regression is achieved as a special case when $a = 1$ and $b = \lambda$.

\subsection{Quantifying differences between networks through the lens of decoding}

\begin{figure}
    \centering
    \includegraphics[scale=0.25] 
    {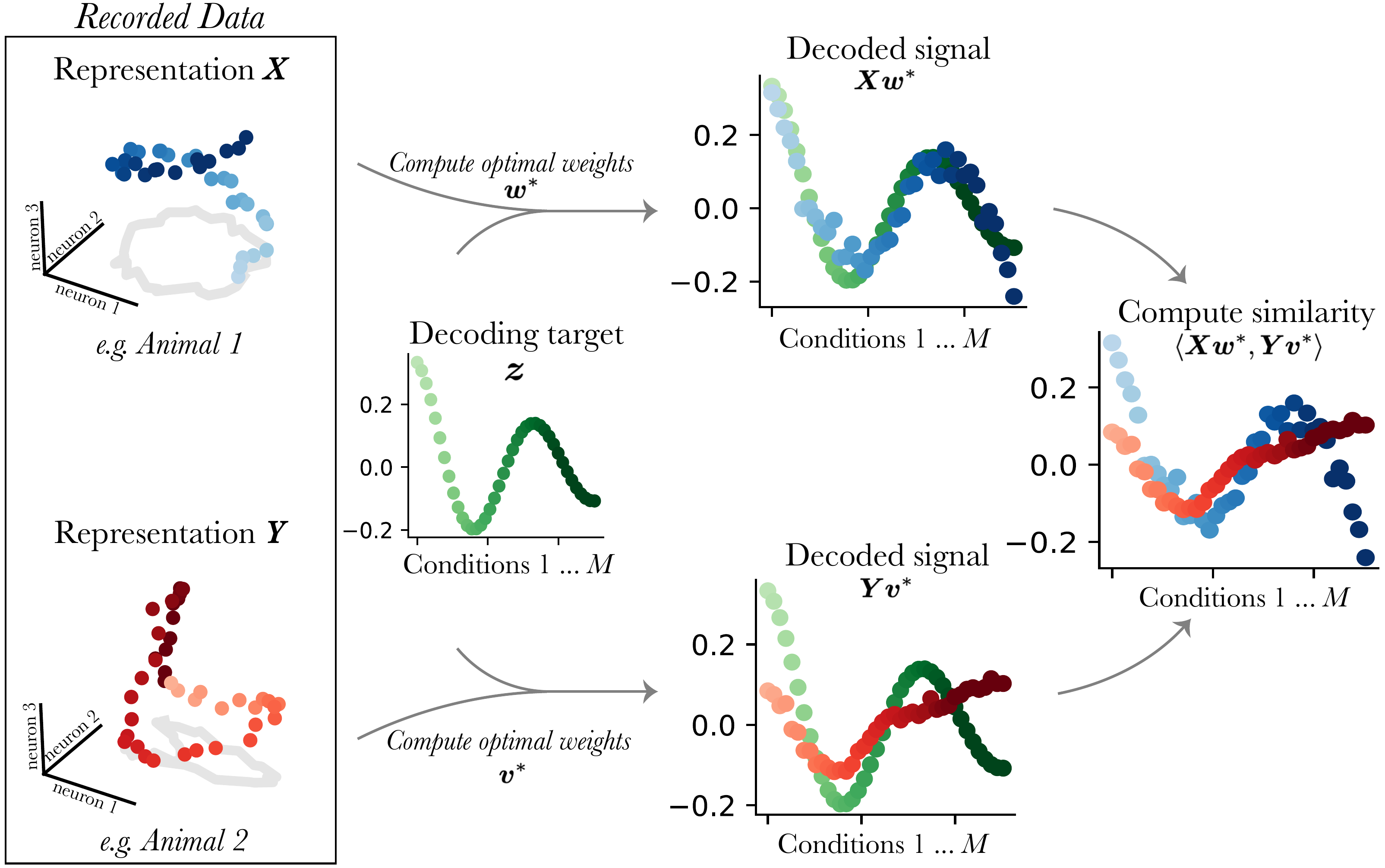}\caption{Schematic of the proposed framework for comparing representations $\mbX$ and $\mbY$ (each dot represents mean neural responses to one of $M$ conditions) in terms of a decoding target $\mbz$. First, optimal linear decoding weights $\mbw^*$ and $\mbv^*$ are computed. Then the similarity between the two systems is measured in terms of the \textit{decoding similarity}: $\langle \mbX \mbw^*, \mbY \mbv^* \rangle$.}
    \label{fig:readouts_simple} 
\end{figure}

We now turn our attention to the the problem of quantifying representational similarity between $\mbX \in \reals^{M \times N_X}$ and $\mbY \in \reals^{M \times N_Y}$.
As before, we specify a target vector $\mbz \in \reals^M$ and optimize $\mbw$ as specified in \cref{eq:readout-optimization-class}.
We optimize $\mbv$ accordingly:
\begin{align}
\underset{\mbv}{\text{maximize}} \quad \tfrac{1}{M} \mbz^\top \mbY \mbv - \tfrac{1}{2} \mbv^\top \mbG(\mbY) \mbv
\end{align}
yielding the solution $\mbv^* = \tfrac{1}{M} \mbG(\mbY)^{-1} \mbY^\top \mbz$.
We are interested in quantifying similarity between $\mbX$ and $\mbY$ by comparing the behavior of these optimal decoders.
A straightforward way to quantify the alignment between the decoded signals is to compute their inner product, which we call the \emph{decoding similarity} (\cref{fig:readouts_simple}):
\begin{equation}
\label{eq:inner-prod-linear-optimal-readouts-fixed-z}
\langle \mbX \mbw^*, \mbY \mbv^* \rangle = \mbw^{*\top} \mbX^\top \mbY \mbv^* = \mbz^\top \mbK_X\mbK_Y \mbz 
\end{equation}
where we have leveraged \cref{eq:linear-readout-closed-form} and the analogous closed form expression for $\mbv^*$ while introducing the following normalized kernel similarity matrices:
\begin{equation}
\label{eq:change-of-variables-generalized-whitening}
    \mbK_X:=\frac{1}{M}\mbX\mbG(\mbX)^{-1}\mbX^\top \quad \text{and} \quad \mbK_Y:=\frac{1}{M}\mbY\mbG(\mbY)^{-1}\mbY^\top.
\end{equation}
Note that we have suppressed the dependence of $\mbK_X$ and $\mbK_Y$ on the function $\mbG(\cdot)$ to reduce notational clutter.

\Cref{eq:inner-prod-linear-optimal-readouts-fixed-z} is a reasonable quantification of similarity between $\mbX$ and $\mbY$ with respect to a fixed decoding task specified by $\mbz \in \reals^M$, and it may be a fruitful approach for hypothesis-driven comparisons of neural systems. We comment further on this possibility in our discussion.
On the other hand, many neural representations support a variety of downstream behavioral tasks.
For example, features extracted in early stages of visual processing (edges and textures) can be used to support many visual tasks such as object classification, segmentation, image compression, et cetera.
How can \cref{eq:inner-prod-linear-optimal-readouts-fixed-z} be adapted to quantify the similarity between $\mbX$ and $\mbY$ across more than one pre-specified decoding task?
We explore three simple ideas.

\textbf{Option 1 --- best case alignment.}
An optimistic measure of representational similarity between networks is to find the decoding task $\mbz \in \reals^M$ that results in a maximal decoder alignment.
Formally,
\begin{equation}
\label{eq:best-case-alignment}
\max_{\Vert \mbz \Vert_2 = 1} \langle \mbX \mbw^*, \mbY \mbv^* \rangle = \max_{\Vert \mbz \Vert_2 = 1} \mbz^\top \mbK_X \mbK_Y \mbz
\end{equation}
where the constraint that $\Vert \mbz \Vert_2 = 1$ is needed to keep the maximal value finite.

\textbf{Option 2 --- worst case alignment.}
An alternative is to find the  task that minimizes alignment:
\begin{equation}
\label{eq:worst-case-alignment}
\min_{\Vert \mbz \Vert_2 = 1} \langle \mbX \mbw^*, \mbY \mbv^* \rangle = \min_{\Vert \mbz \Vert_2 = 1} \mbz^\top \mbK_X \mbK_Y \mbz
\end{equation}
which is obviously the pessimistic counterpart to option 1 above.

\textbf{Option 3 --- average case alignment.}
Instead of maximizing or minimizing over $\mbz$, we can approach the problem by averaging over a distribution of decoding tasks.
Formally, we can quantify alignment in terms of the \textit{average decoding similarity}:
\begin{equation}
\label{eq:average-case-alignment}
\expect \langle \mbX \mbw^*, \mbY \mbv^* \rangle
\end{equation}
where the expectation is taken with respect to $\mbz \sim P_{\mbz}$.

In the next subsection, we show that the third option is closely related to several existing neural representational similarity measures.
However, all three quantities are potentially of interest and they are easy to compute as we document below in \cref{proposition:best-worst-case,proposition:avg-case}.

\begin{proposition}
\label{proposition:best-worst-case}
The best case and worst case decoding alignment scores, \cref{eq:best-case-alignment,eq:worst-case-alignment}, are respectively given by the largest and smallest eigenvalues of:
\begin{equation}
\frac{1}{2} \left ( \mbK_X \mbK_Y + \mbK_Y \mbK_X \right )
\end{equation}
\end{proposition}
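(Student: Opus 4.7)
The plan is to reduce the two optimization problems to the Rayleigh quotient of a symmetric matrix and then invoke the classical Rayleigh--Ritz characterization of extremal eigenvalues. The key observation is that although $\mbK_X \mbK_Y$ is not in general symmetric, only its symmetric part influences the values of the quadratic form $\mbz^\top \mbK_X \mbK_Y \mbz$.

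First I would verify that $\mbK_X$ and $\mbK_Y$, as defined in \cref{eq:change-of-variables-generalized-whitening}, are each symmetric. This is immediate: since $\mbG(\mbX)$ is symmetric positive definite, so is $\mbG(\mbX)^{-1}$, and hence $\mbX \mbG(\mbX)^{-1} \mbX^\top$ is symmetric; the same argument applies to $\mbK_Y$. Then I would apply the standard identity $\mbz^\top \mbA \mbz = \mbz^\top \bigl(\tfrac{1}{2}(\mbA + \mbA^\top)\bigr) \mbz$, valid for any square $\mbA$, to $\mbA = \mbK_X \mbK_Y$. Using $(\mbK_X \mbK_Y)^\top = \mbK_Y^\top \mbK_X^\top = \mbK_Y \mbK_X$, this yields
\begin{equation*}
\mbz^\top \mbK_X \mbK_Y \mbz \;=\; \mbz^\top \left[ \tfrac{1}{2}\bigl(\mbK_X \mbK_Y + \mbK_Y \mbK_X\bigr) \right] \mbz.
\end{equation*}

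Finally I would invoke Rayleigh--Ritz: for a symmetric matrix $\mbS$, the maximum and minimum of $\mbz^\top \mbS \mbz$ subject to $\|\mbz\|_2 = 1$ are, respectively, the largest and smallest eigenvalues of $\mbS$. Setting $\mbS = \tfrac{1}{2}(\mbK_X \mbK_Y + \mbK_Y \mbK_X)$, which is symmetric by construction, closes the argument for both \cref{eq:best-case-alignment} and \cref{eq:worst-case-alignment} simultaneously.

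I do not anticipate any genuine obstacle here, since the proposition is a short application of Rayleigh--Ritz combined with the elementary fact that the antisymmetric part of a matrix drops out of its associated quadratic form. The only point worth double-checking is the transpose computation $(\mbK_X \mbK_Y)^\top = \mbK_Y \mbK_X$, which relies on the symmetry of the two factors and is what forces the symmetrized product, rather than $\mbK_X \mbK_Y$ itself, to appear in the statement.
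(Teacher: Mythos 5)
Your proof is correct and follows essentially the same route as the paper's: split $\mbz^\top \mbK_X \mbK_Y \mbz$ into symmetric and skew-symmetric parts, discard the skew part, and apply the Rayleigh--Ritz characterization of extremal eigenvalues. Your explicit check that $(\mbK_X \mbK_Y)^\top = \mbK_Y \mbK_X$ via the symmetry of $\mbK_X$ and $\mbK_Y$ is a small but worthwhile addition, since that is what identifies the symmetric part with $\tfrac{1}{2}(\mbK_X \mbK_Y + \mbK_Y \mbK_X)$ as written in the statement.
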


\begin{proposition}
\label{proposition:avg-case}
The average decoding similarity, \cref{eq:average-case-alignment}, is given by:
\begin{align}
\label{eq:proposition-avg-case}
\expect \langle \mbX \mbw^*, \mbY \mbv^* \rangle =
\Tr \left( \mbK_X \mbK_z \mbK_Y \right),
\end{align}
where $\mbK_z:=\E[\mbz\mbz^\top]$ represents a kernel matrix for the targets.
\end{proposition}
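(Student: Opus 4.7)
The plan is to begin from the fixed-$\mbz$ expression already established in \cref{eq:inner-prod-linear-optimal-readouts-fixed-z}, namely $\langle \mbX \mbw^*, \mbY \mbv^* \rangle = \mbz^\top \mbK_X \mbK_Y \mbz$, and then simply take the expectation with respect to $\mbz \sim P_{\mbz}$. The idea is that all $\mbz$-dependence on the right-hand side is confined to the outer quadratic form, so the averaging reduces to computing the second-moment matrix $\mbK_z = \E[\mbz \mbz^\top]$.

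The key computational step is the standard scalar-to-trace rewrite
\begin{equation*}
\mbz^\top \mbK_X \mbK_Y \mbz = \Tr(\mbK_X \mbK_Y \mbz \mbz^\top),
\end{equation*}
which lets me pull the expectation inside the trace by linearity, yielding $\E \langle \mbX \mbw^*, \mbY \mbv^* \rangle = \Tr(\mbK_X \mbK_Y \mbK_z)$. This already captures the content of the proposition; the only remaining task is to match the precise ordering $\Tr(\mbK_X \mbK_z \mbK_Y)$ in the statement.

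To do this, I would invoke the symmetry of each of $\mbK_X$, $\mbK_Y$, and $\mbK_z$ (immediate from \cref{eq:change-of-variables-generalized-whitening} and the definition of $\mbK_z$ as a covariance-type matrix). Applying $\Tr(\cdot) = \Tr((\cdot)^\top)$ to $\Tr(\mbK_X \mbK_Y \mbK_z)$ gives $\Tr(\mbK_z \mbK_Y \mbK_X)$, and a single cyclic rotation then produces $\Tr(\mbK_X \mbK_z \mbK_Y)$, as desired.

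Honestly, I do not anticipate any real obstacle here: the proposition is essentially a one-line consequence of \cref{eq:inner-prod-linear-optimal-readouts-fixed-z} plus the trace identity $\E[\mbz^\top A \mbz] = \Tr(A \, \E[\mbz \mbz^\top])$. The only thing that requires a moment of care is justifying the particular ordering of the three kernels inside the trace, which, as noted above, follows from symmetry and the cyclic property.
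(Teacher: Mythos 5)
Your proposal is correct and matches the paper's own argument: both start from \cref{eq:inner-prod-linear-optimal-readouts-fixed-z}, rewrite the scalar quadratic form as a trace, and pull the expectation inside by linearity to obtain $\mbK_z$. Your extra care in justifying the ordering $\Tr(\mbK_X \mbK_z \mbK_Y)$ via symmetry of the kernels plus the cyclic/transpose identities is a point the paper glosses over (it attributes the reordering to the cyclic property alone), so your version is, if anything, slightly more complete.
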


\begin{proof}[Proof of \cref{proposition:best-worst-case}]
Let $\mbA \in \reals^{M \times M}$ be any matrix, potentially non-symmetric.
We can express $\mbA$ as a sum of a symmetric and skew symmetric matrix, $\mbA = \mbB + \mbC$, where:
\begin{equation}
\mbB = \frac{1}{2}\left ( \mbA + \mbA^\top \right) \qquad \text{and} \qquad \mbC = \frac{1}{2}\left ( \mbA - \mbA^\top \right)
\end{equation}
For any $\mbz \in \reals^M$, we have $\mbz^\top \mbA \mbz = \mbz^\top \mbB \mbz + \mbz^\top \mbC \mbz$.
It is easy to verify that $\mbz^\top \mbC \mbz$ equals zero.
Thus, $\mbz^\top \mbA \mbz = \mbz^\top \mbB \mbz$, and maximizing over $\mbz$ subject to $\Vert \mbz \Vert_2$ yields the largest eigenvalue of $\mbB$ (since it is symmetric).
Likewise, minimizing yields the smallest eigenvalue of $\mbB$.
Substituting $\mbA = \mbK_X \mbK_Y$ into this argument proves the proposition.
\end{proof}

\begin{proof}[Proof of \cref{proposition:avg-case}]
Using \cref{eq:inner-prod-linear-optimal-readouts-fixed-z}, we have:
\begin{equation*} 
\expect \langle \mbX \mbw^*, \mbY \mbv^* \rangle = \expect [\mbz^\top \mbK_X \mbK_Y \mbz ].
\end{equation*}
Using the cyclic property of the trace operator to manipulate this further, we get:
\begin{equation*}
\mbz^\top \mbK_X \mbK_Y \mbz  = \Tr( \mbK_X\mbz\mbz^\top\mb \mbK_Y).
\end{equation*}
Taking the expectation with respect to $\mbz$ and using linearity of trace yields \cref{eq:proposition-avg-case}.
\end{proof}

\section{Interpretations of CKA and related measures}
\label{sec:cka-interpretation-section}

\begin{table}
  \centering
  \begin{tabular}{lcc}
    \toprule
    \cmidrule(r){1-2}
    Similarity measure     & $a$     & $b$ \\
    \midrule
    Linear CKA & 0  & $b$     \\
    GULP & 1 & $\lambda$ \\
    CCA     & 1 & 0      \\
    ENSD     & 0       & $\frac1M\frac{\Tr[\mbC_X^2]}{\Tr[\mbC_X]}$  \\
    \bottomrule
  \end{tabular}
  \vspace{1em}
  \caption{Existing similarity measures are equivalent to the decoding score in \cref{eq:inner-prod-linear-optimal-readouts-fixed-z} or the decoding distance in \cref{eq:decoding_metric} for different choices of $a$ and $b$ in \cref{eq:G-a-b}.}
  \label{tab:GX}
\end{table}

Intuitively, \cref{proposition:avg-case} says that the expected inner product between optimal linear readouts is equal to a weighted matrix inner product between kernel matrices $\mbK_X$ and $\mbK_Y$ (with respect to a positive semi-definite weighting matrix $\mbK_z$, which of course depends on the distribution of decoding targets $\mbz$).
Such kernel matrices and inner products are important for several methods of quantifying representational similarity.
We can therefore leverage this result to provide new interpretations of several distance measures.
The most basic idea is to consider a distribution over $\mbz$ which satisfies $\mbK_z = \mbI$.
This leads us to measure distances using standard Euclidean geometry, which we state as the following corollary to \cref{proposition:avg-case}.  
We explore a relaxation of this assumption and the limit as the number of input samples $M\rightarrow \infty$ in \cref{sec:asympt-limit}.

\begin{corollary}\label{corollary:new}
For any distribution over decoding targets satisfying $\mbK_z = \mbI$, the Frobenius inner product between normalized kernel matrices $\mbK_X$ and $\mbK_Y$ is equal to the average decoding similarity:
\begin{equation}
\expect \langle \mbX \mbw^* , \mbY \mbv^* \rangle = \Tr[\mbK_X \mbK_Y]
\end{equation}
Further, when $\mbK_z = \mbI$, the (squared) average decoding distance is equal to the (squared) Euclidean distance between $\mbK_X$ and $\mbK_Y$:
\vspace{2mm}
\begin{equation}\label{eq:decoding_metric}
\expect \Vert \mbX \mbw^* - \mbY \mbv^* \Vert^2_2 = \Vert \mbK_X - \mbK_Y  \Vert_F^2.
\end{equation}  
\end{corollary}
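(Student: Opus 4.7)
The plan is to derive both identities as essentially immediate consequences of \cref{proposition:avg-case}, with the only nontrivial step being the recognition that the self-similarity terms arising from expanding the squared norm have the same form as the cross-similarity term.

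First, for the inner product identity, I would simply substitute $\mbK_z = \mbI$ into \cref{eq:proposition-avg-case}:
\begin{equation*}
\expect \langle \mbX \mbw^*, \mbY \mbv^* \rangle = \Tr(\mbK_X \mbI \mbK_Y) = \Tr(\mbK_X \mbK_Y).
\end{equation*}
I would then remark that, since $\mbK_X$ and $\mbK_Y$ are symmetric, $\Tr(\mbK_X \mbK_Y) = \Tr(\mbK_X^\top \mbK_Y) = \langle \mbK_X, \mbK_Y\rangle_F$, which is the Frobenius inner product as claimed.

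Next, for the squared decoding distance, I would expand:
\begin{equation*}
\expect \Vert \mbX \mbw^* - \mbY \mbv^* \Vert^2_2 = \expect \Vert \mbX \mbw^* \Vert^2_2 - 2\, \expect \langle \mbX \mbw^*, \mbY \mbv^* \rangle + \expect \Vert \mbY \mbv^* \Vert^2_2,
\end{equation*}
and handle each term. The cross term is $-2 \Tr(\mbK_X \mbK_Y)$ by the first part of the corollary. For the self terms, I would apply \cref{proposition:avg-case} twice with $\mbY$ replaced by $\mbX$ (so that $\mbv^* = \mbw^*$) and with $\mbX$ replaced by $\mbY$, respectively, yielding $\expect \Vert \mbX \mbw^*\Vert^2_2 = \Tr(\mbK_X \mbK_X) = \Vert \mbK_X \Vert_F^2$ and analogously $\expect \Vert \mbY \mbv^*\Vert^2_2 = \Vert \mbK_Y \Vert_F^2$. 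Collecting terms gives $\Vert \mbK_X\Vert_F^2 - 2\Tr(\mbK_X\mbK_Y) + \Vert \mbK_Y \Vert_F^2$, which factors as $\Vert \mbK_X - \mbK_Y\Vert_F^2$.

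There is no real obstacle here; the result is a corollary in the strict sense. The only conceptual subtlety worth flagging in the writeup is that the self-similarity terms fit the template of \cref{proposition:avg-case} by taking the two representations to coincide, which then gives $\Vert \mbK_X\Vert_F^2$ as a trace of a product of identical kernels. Everything else is bookkeeping using symmetry of $\mbK_X, \mbK_Y$ and linearity of the expectation and trace.
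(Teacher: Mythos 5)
Your proposal is correct and follows exactly the route the paper intends: the corollary is obtained by substituting $\mbK_z = \mbI$ into \cref{proposition:avg-case} for the cross term, and applying the same proposition with the two representations identified to get the self terms $\Vert \mbK_X \Vert_F^2$ and $\Vert \mbK_Y \Vert_F^2$ before factoring. The paper leaves this unwritten precisely because it is the immediate bookkeeping you describe, so there is nothing to add or correct.
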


This corollary can be used to unify several neural representational similarity measures, each corresponding to a different choice of the penalty function $\mbG(\mbX)$, Table~\ref{tab:GX}.
Further, it yields an interpretation of each measure in terms of the expected overlap or difference in decoding readouts between networks.

\paragraph{Linear CKA}
First, for $b > 0$, the following choice of $\mbG(\mbX)$ yields the linear CKA score~\cite{cortes2012,kornblith2019}:
\begin{equation*}
\mbG(\mbX) = b \mbI \quad \Rightarrow \quad \frac{\expect \langle \mbX \mbw^*, \mbY \mbv^* \rangle}{\sqrt{\expect \langle \mbX \mbw^*, \mbX \mbw^* \rangle \expect \langle \mbY \mbv^*, \mbY \mbv^* \rangle}} = \frac{\Tr[\mbK_X\mbK_Y]}{\Vert \mbK_X \Vert_F \Vert \mbK_Y \Vert_F} = \text{CKA}(\mbX, \mbY)
\end{equation*}

where we have normalized the expected inner product to obtain a similarity score ranging between 0 and 1.
Note that the centering step of CKA is taken care of by our assumption, stated at the beginning of \cref{sec:setup-linear-decoding}, that the columns of $\mbX$ and $\mbY$ are preprocessed to have mean zero. 
Thus, linear CKA be interpreted as a normalized average decoding similarity with quadratic weight regularization.

\paragraph{GULP distance}
The CKA score is closely related to Euclidean distance, so it is not surprising that we can extend our analysis to other methods that utilize this distance.
For example, the sample GULP distance, proposed by \cite{BoixAdsera2022}, is the equal to the average decoding distance with a regularization parameter $\lambda > 0$.
In particular,
\begin{align*}
\mbG(\mbX) = \mbC_X + \lambda \mbI \quad \Rightarrow \quad  \expect \Vert \mbX \mbw^* - \mbY \mbv^* \Vert^2_2 = \Vert \mbK_X - \mbK_Y \Vert_F^2 = \text{GULP}_\lambda^2(\mbX, \mbY)
\end{align*}

yields the plug-in estimate of the squared GULP distance \cite[section 3]{BoixAdsera2022}.
We therefore see that the Euclidean distance between normalized kernel matrices can be interpreted as equal to the average decoding distance, or also as an approximation of the population formulation of the GULP distance established in \cite{BoixAdsera2022}.  

\paragraph{Canonical correlation analysis (CCA)} 
CCA is another method that has been used to compare neural representations that
fits nicely into our framework~\cite{raghu2017svcca,williams2021generalized}.
When the two networks have the same dimension, i.e. $N = N_X = N_Y$, the output of CCA is a sequence of $N$ canonical correlation coefficients $1 \geq \rho_1 \geq \dots \geq \rho_N \geq 0$.
The average squared canonical coefficients is sometimes used as a measure of similarity,\footnote{In the statistics literature, this summary statistic is sometimes called \textit{Yanai's generalized generalized coefficient of determination} \cite{ramsay1984matrix,Yanai1974}.} and \textcite{kornblith2019} showed this to be related to the linear CKA score on whitened representations.
In particular, assuming the covariance matrices $\mbC_X$ and $\mbC_Y$ are invertible, we have:
\begin{equation}
\mbG(\mbX) = \mbC_X \quad \Rightarrow \quad  \frac{\expect \langle \mbX \mbw^*, \mbY \mbv^* \rangle}{\sqrt{\expect \langle \mbX \mbw^*, \mbX \mbw^* \rangle \expect \langle \mbY \mbv^*, \mbY \mbv^* \rangle}} = \frac{1}{N} \sum_{i=1}^N \rho_i^2
\end{equation}
In practice, it is often useful to incorporate regularization into CCA because the covariance matrices $\mbC_X$ and $\mbC_Y$ may be ill-conditioned.
As mentioned above, the regularization used in GULP distance corresponds to choosing $\mbG(\mbX) = \mbC_X + \lambda \mbI$.
A similar approach proposed in \cite{williams2021generalized} is to use $\mbG(\mbX) = (1 - \alpha) \mbC_X + \alpha \mbI$ for a hyperparameter ${0 \leq \alpha \leq 1}$.
This effectively allows for a continuous interpolation between a CCA-based similarity score and linear CKA.

\paragraph{Effective Number of Shared Dimensions (ENSD)}
Finally, we note a connection to recent work by \textcite{Giaffar2024}, who studied the following quantity, which they call the ENSD:
\begin{equation}
\ensd(\mbX , \mbY) = \frac{\Tr[\mbX \mbX^\top] \Tr[\mbY \mbY^\top] \Tr[\mbX \mbX^\top \mbY \mbY^\top]}{\Tr[\mbX \mbX^\top \mbX \mbX^\top] \Tr[\mbY \mbY^\top \mbY \mbY^\top]} 
\end{equation}
This is simply a rescaling of the inner product, $\Tr[\mbX \mbX^\top \mbY \mbY^\top]$ and therefore easily fits within our framework using the following choice for $\mbG(\mbX)$:
\begin{equation}
\mbG(\mbX) = \frac{\Tr[\mbC_X^2]}{\Tr[\mbC_X]} \mbI \quad \Rightarrow \quad 
\expect \langle \mbX \mbw^*, \mbY \mbv^* \rangle = \ensd(\mbX, \mbY )
\end{equation}
One of the most interesting features of ENSD is it's connection to the \textbf{participation ratio}, $\cR$:
\begin{equation}
\cR(\mbC_X) = \frac{\left (\Tr[\mbC_X] \right )^2}{\Tr[\mbC_X^2]} = \ensd(\mbX, \mbX)
\end{equation}
which is used within physics and computational neuroscience (e.g. \cite{Gao2015, Dabrowska2021, Recanatesi2022}) as a continuous analogue to the rank of a matrix, and is a measure of the effective dimensionality of the data matrix $\mbX$.
In particular, one can show that $1 \leq \cR(\mbC_X) \leq \text{rank}(\mbC_X)$, and that these two inequalities respectively saturate when $\mbC_X$ has only one nonzero eigenvalue and when all nonzero eigenvalues are equal.
It is also interesting to note that the \textit{inverse} participation ratio can be captured by average decoding similarity under a very simple choice for $\mbG(\mbX)$:
\begin{equation}
\mbG(\mbX) = \Tr[\mbC_X] \mbI \quad \Rightarrow \quad \expect \langle \mbX \mbw^*, \mbX \mbw^* \rangle = \expect \Vert \mbX \mbw^* \Vert^2 =  \frac{1}{\cR(\mbC_X)}
\end{equation}

We reiterate that all of the results enumerated above depend on choosing a distribution over decoding targets which satisfies $\mbK_z = \mbI$.
This enforces the decoding targets $z_1, \dots, z_M$ to be uncorrelated random variables with unit variance.
The unit variance constraint is analogous to the constraint that $\Vert \mbz \Vert_2 = 1$ that we imposed when maximizing or minimizing over $\mbz$ in \cref{eq:best-case-alignment,eq:worst-case-alignment}.
Indeed, scaling the variance of the $\mbz_i$'s simply scales the magnitude of $\expect \langle \mbX \mbw^*, \mbY \mbv^* \rangle$. Thus, setting the variance of each sample to one (or some other constant) is reasonable.
The constraint that $\expect[z_i z_j] = 0$ for all $i \neq j$ may be relaxed, as we discuss in \cref{sec:asympt-limit}.

\section{Relating the Shape of Neural Representations to Decoding}
\label{sec:shape-distance-connection}

In \cref{corollary:new} we saw that the Euclidean distance and Frobenius inner product between $\mbK_X$ and $\mbK_Y$ enjoys a nice interpretation in terms of (average) decoder similarity.
This does not provide a completely satisfactory answer of our original question: What does the \textit{geometry} of a neural representation imply about its function (i.e. decoder performance)?
As discussed in \cref{sec:intro}, comparing neural representations through linear kernel matrices, $\mbK_X$ and $\mbK_Y$, is motivated from geometric principles---namely, their invariance to orthogonal transformations.
Indeed, for any orthogonal matrix $\mbR$, one can verify that:
\begin{equation}
\mbG(\mbX \mbR)^{-1} = \mbR^\top \mbG(\mbX)^{-1} \mbR
\end{equation}
whenever $\mbG(\mbX)$ is parameterized as in \cref{eq:G-alpha-beta}.
Therefore, then the transformation $\mbX \mapsto \mbX \mbR$ leaves the whitened linear kernel matrix unchanged.
That is, if $\mbX^\prime = \mbX \mbR$, we have:
\begin{equation}
\mbK_X = \tfrac{1}{M}\mbX \mbG(\mbX)^{-1} \mbX^\top = \tfrac{1}{M}\mbX \mbR \mbG(\mbX \mbR)^{-1}  \mbR^\top \mbX^\top = \mbK_{X'}.
\end{equation}

The \textbf{Procrustes shape distance}, $\cP(\cdot,\cdot)$, is another popular method which is invariant to orthogonal transformations~\cite{williams2021generalized,Ding2021}.
This approach connects to a broader and decades-old literature on \textit{shape analysis}~\cite{dryden2016statistical,kendall2009shape} whose applications include anatomical comparisons across biological species~\cite{Elewa2004-sb}, and analysis of planar curves such as handwriting data~\cite{Srivastava2016-yq}.
If the two neural responses are mean-centered and have the same dimensionality, then the Procrustes distance is simply the minimal Euclidean distance obtained by rotating and reflecting one set of responses onto the other.
That is, when $\mbX \in \reals^{M \times N_X}$ and $\mbY \in \reals^{M \times N_Y}$ and we have $N = N_X = N_Y$, the Procrustes distance is:

\begin{equation}\label{eq:proc-dist-def}
\cP(\mbX, \mbY) = \min_{\mbR \in \cO(N)} \Vert \mbX - \mbY \mbR \Vert_F
\end{equation}
When $N_X \neq N_Y$, a straightforward generalization of the Procrustes distance is to zero pad column-wise so that the matrix dimensions match.
Geometrically, this can be interpreted as embedding the lower-dimensional point cloud into the higher-dimensional space and optimizing $\mbR$ within this space to align the point clouds.

Some empirical analyses have highlighted cases where using the Procrustes distance turns out to be ``better'' in some sense than linear CKA~\cite{Ding2021,cloos2024differentiable}.
Rigorously defining and benchmarking what it means to be ``better'' in this context remains an open discussion~\cite{klabunde2024resi}, but it is nonetheless of interest to understand what Procrustes distance and associated definitions of \textit{shape} imply about neural decoding.

In \cref{appendix:bounds} we prove the following result, which states that the Procrustes distance between appropriately normalized representations constrains the average decoding distance, or equivalently, that the average decoding distance bounds the Procrustes distance from above and below.

\begin{proposition}\label{proposition:bounds}

Defining normalization constants $\alpha = \Tr \mbK_X$ and $\beta = \Tr \mbK_Y$, we have an upper and lower bound on the Procrustes distance:

\begin{equation}\label{eq:bound_proc}
 (\alpha + \beta) - \sqrt{(\alpha + \beta )^2 - \pr_\Delta \expect \Vert \mbX \mbw^* - \mbY \mbv^* \Vert_2^2} \leq \cP (\tilde{\mbX}, \tilde{\mbY})^2 \le \sqrt{\pr_\Delta \expect \Vert \mbX \mbw^* - \mbY \mbv^* \Vert_2^2}
\end{equation}

where $\pr_\Delta = \pr(\mbK_X - \mbK_Y)$ is the participation ratio of the matrix difference $\mbK_X - \mbK_Y$.
Equivalently, we may rewrite \cref{eq:bound_proc} to find

\begin{equation}\label{eq:bound_euclid}
\frac{1}{\pr_\Delta}\cP (\tilde{\mbX}, \tilde{\mbY})^4 \le \expect \Vert \mbX \mbw^* - \mbY \mbv^* \Vert_2^2 \le \frac{2(\alpha + \beta)\cP (\tilde{\mbX}, \tilde{\mbY})^2 - \cP (\tilde{\mbX}, \tilde{\mbY})^4 }{{\pr_\Delta}},
\end{equation}
where we have defined normalized transformations $\tilde{\mbX}$ and $\tilde{\mbY}$ in terms of the transformation $\mbG(\cdot)$:
\begin{align*}
    \tilde{\mbX} := \tfrac{1}{\sqrt{M}} \mbX \mbG(\mbX)^{-1/2}\quad\text{and}\quad \tilde{\mbY} := \tfrac{1}{\sqrt{M}} \mbY \mbG(\mbY)^{-1/2}.
\end{align*}
Note that $\tilde{\mbX}\tilde{\mbX}^\top = 
\mbK_X$ and $\tilde{\mbY}\tilde{\mbY}^\top = \mbK_Y$.
\end{proposition}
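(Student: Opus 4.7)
The plan is to translate both sides of the claim into Schatten-norm statements about matrices built from $\tilde{\mbX}$ and $\tilde{\mbY}$, and then to invoke two classical matrix inequalities. Two preparatory observations do most of the setup. First, by the von Neumann trace inequality, the optimal orthogonal alignment in \cref{eq:proc-dist-def} is read off from the SVD of $\tilde{\mbY}^\top\tilde{\mbX}$, giving
\[
\cP(\tilde{\mbX},\tilde{\mbY})^2 \;=\; \alpha + \beta - 2\,\|\tilde{\mbY}^\top\tilde{\mbX}\|_{*},
\]
where $\|\cdot\|_{*}$ denotes the nuclear norm. Second, by \cref{corollary:new}, $\expect\|\mbX\mbw^{*}-\mbY\mbv^{*}\|_2^2 = \|\Delta\|_F^2$ for $\Delta := \mbK_X - \mbK_Y = \tilde{\mbX}\tilde{\mbX}^\top - \tilde{\mbY}\tilde{\mbY}^\top$, and the natural extension of the participation ratio to symmetric matrices gives the identity $\pr_\Delta\,\|\Delta\|_F^2 = \|\Delta\|_{*}^2$, which I will use on both sides.

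\textbf{Upper bound on $\cP^2$.} I would factor $\tilde{\mbX} = \mbK_X^{1/2}\mbQ_X$ and $\tilde{\mbY} = \mbK_Y^{1/2}\mbQ_Y$ via the polar decomposition, with $\mbQ_X$ and $\mbQ_Y$ partial isometries. Substituting the feasible (but generally suboptimal) rotation $\mbR = \mbQ_Y^\top\mbQ_X$ into \cref{eq:proc-dist-def} collapses the cross terms and yields the Mathias-style upper bound $\cP(\tilde{\mbX},\tilde{\mbY})^2 \le \|\mbK_X^{1/2} - \mbK_Y^{1/2}\|_F^2$. Invoking the Powers--St\o rmer inequality $\|\mbA^{1/2} - \mbB^{1/2}\|_F^2 \le \|\mbA - \mbB\|_{*}$, valid for positive semidefinite $\mbA, \mbB$, then gives $\cP^2 \le \|\Delta\|_{*} = \sqrt{\pr_\Delta\,\|\Delta\|_F^2}$, which is the right-hand inequality in \cref{eq:bound_proc}.

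\textbf{Lower bound on $\cP^2$.} Let $\tilde{\mbY}' := \tilde{\mbY}\mbR^{*}$ with $\mbR^{*}$ the optimal Procrustes rotation, and define $\mbD := \tilde{\mbX} - \tilde{\mbY}'$ and $\mbS := \tilde{\mbX} + \tilde{\mbY}'$, so that $\|\mbD\|_F = \cP$. A direct expansion verifies the algebraic identity $\Delta = \tfrac{1}{2}(\mbS\mbD^\top + \mbD\mbS^\top)$. The triangle inequality for $\|\cdot\|_{*}$ combined with the H\"older-type bound $\|\mbA\mbB^\top\|_{*} \le \|\mbA\|_F\|\mbB\|_F$ then gives $\|\Delta\|_{*} \le \|\mbS\|_F \cdot \|\mbD\|_F$. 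A short computation yields $\|\mbS\|_F^2 = \alpha + \beta + 2\|\tilde{\mbY}^\top\tilde{\mbX}\|_{*} = 2(\alpha+\beta) - \cP^2$, so squaring produces the quadratic inequality $\cP^4 - 2(\alpha+\beta)\cP^2 + \|\Delta\|_{*}^2 \le 0$. Solving for $\cP^2$ and keeping the smaller root (the one consistent with the trivial bound $\cP^2 \le \alpha+\beta$) yields the left-hand inequality in \cref{eq:bound_proc}; the equivalent form \cref{eq:bound_euclid} is then algebraic rearrangement.

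\textbf{Main obstacle.} The most delicate step is the Powers--St\o rmer inequality in the upper bound: it is a non-trivial operator-theoretic fact rather than a one-line calculation, so the proof would either cite a standard reference or reproduce its short self-adjoint argument. Two smaller bookkeeping issues remain. When $N_X \neq N_Y$, the zero-padding convention lets the polar-decomposition step go through verbatim in the shared ambient space. And the identity $\pr_\Delta \|\Delta\|_F^2 = \|\Delta\|_{*}^2$ requires interpreting the participation ratio of the indefinite symmetric matrix $\Delta$ as $(\sum_i |\lambda_i|)^2/\sum_i \lambda_i^2$; this agrees with the paper's definition on positive semidefinite matrices, and I would flag the extension explicitly at the start of the proof to justify using the same symbol $\pr$.
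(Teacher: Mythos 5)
Your proof is correct, and while your upper bound follows the paper's route exactly, your lower bound is a genuinely different argument. For the upper bound, the paper likewise passes to $\cP(\tilde{\mbX},\tilde{\mbY})^2 \le \Vert \mbK_X^{1/2}-\mbK_Y^{1/2}\Vert_F^2$ (via the variational form of the Bures distance), applies Powers--St\o rmer, and converts $\Vert\mbK_X-\mbK_Y\Vert_*$ to $\sqrt{\pr_\Delta}\,\Vert\mbK_X-\mbK_Y\Vert_F$; your polar-decomposition substitution is just an explicit feasible-point version of the same step. For the lower bound, however, the paper argues in the style of the Fuchs--van de Graaf inequalities: it computes the eigenvalues of a rank-one difference $\alpha\hat{x}\hat{x}^\top-\beta\hat{y}\hat{y}^\top$, invokes Uhlmann's theorem to identify $|\langle\sqrt{\alpha}\hat{x},\sqrt{\beta}\hat{y}\rangle|$ with the fidelity, and uses monotonicity of the nuclear norm under the partial trace to obtain $\Vert\mbK_X-\mbK_Y\Vert_*^2\le(\alpha+\beta)^2-4\mathcal{F}(\mbK_X,\mbK_Y)^2$. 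Your identity $\mbK_X-\mbK_Y=\tfrac12(\mbS\mbD^\top+\mbD\mbS^\top)$ with $\mbD=\tilde{\mbX}-\tilde{\mbY}\mbR^*$, $\mbS=\tilde{\mbX}+\tilde{\mbY}\mbR^*$, followed by the triangle and H\"older inequalities for the nuclear norm, yields $\Vert\mbK_X-\mbK_Y\Vert_*^2\le\cP^2\bigl(2(\alpha+\beta)-\cP^2\bigr)$, which is algebraically the same quadratic inequality (since $4\mathcal{F}^2=(\alpha+\beta-\cP^2)^2$), so the two derivations land on identical bounds. Your route is shorter, entirely elementary, and avoids the quantum-information machinery of purifications and partial traces; the paper's route buys a transparent characterization of when the lower bound saturates (both kernels rank one), which your inequalities would only reveal after tracking the equality cases of the triangle and H\"older steps. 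Your flag about extending the participation ratio to the indefinite matrix $\mbK_X-\mbK_Y$ via singular values is exactly the convention the paper adopts in its appendix, so no discrepancy arises there.
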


We derive these bounds by leveraging the equivalence between the Procrustes distance between normalized representations $\cP (\tilde{\mbX}, \tilde{\mbY})$ and a distance metric on positive semi-definite kernel matrices, the \emph{Bures distance} \cite{Harvey2024}.

We recall that the choice of function $\mbG(\cdot)$ in the decoding problem \cref{eq:readout-optimization-class} (which determines optimal readout weights $\mbw^*$ and $\mbv^*$) will set the normalization 
transformation of the representations $\tilde{\mbX}$ and $ \tilde{\mbY}$.
When $\mbG(\cdot)$ is chosen such that $\Tr \mbK_X = \Tr \mbK_Y = 1$,\footnote{For example  $\mbG(\mbX) = N_X\mbC_X$ or the CKA choice from above with $b = \Tr[\mbC_X]$. This is not a requirement for these bounds to hold, but simply a choice of normalization for the sake of plotting.} we can plot these bounds with $\alpha = \beta = 1$ for various values of participation ratio $\pr_\Delta$ (\cref{fig:bounds}).

Both upper and lower bounds are saturated (\cref{appendix:bounds}), and reveal 
an allowed region of both Procrustes and average decoding distance as a function of the participation ratio $\pr_\Delta$.  
Under the $\alpha = \beta = 1$ normalization, the Procrustes distance may range between 0 and $\sqrt{2}$, and the average decoding distance may take values between $0$ and $\frac{2}{\sqrt{\pr_\Delta}}$, which is reflected in \cref{fig:bounds}. 
\begin{figure}[t]
\includegraphics[width=\textwidth]{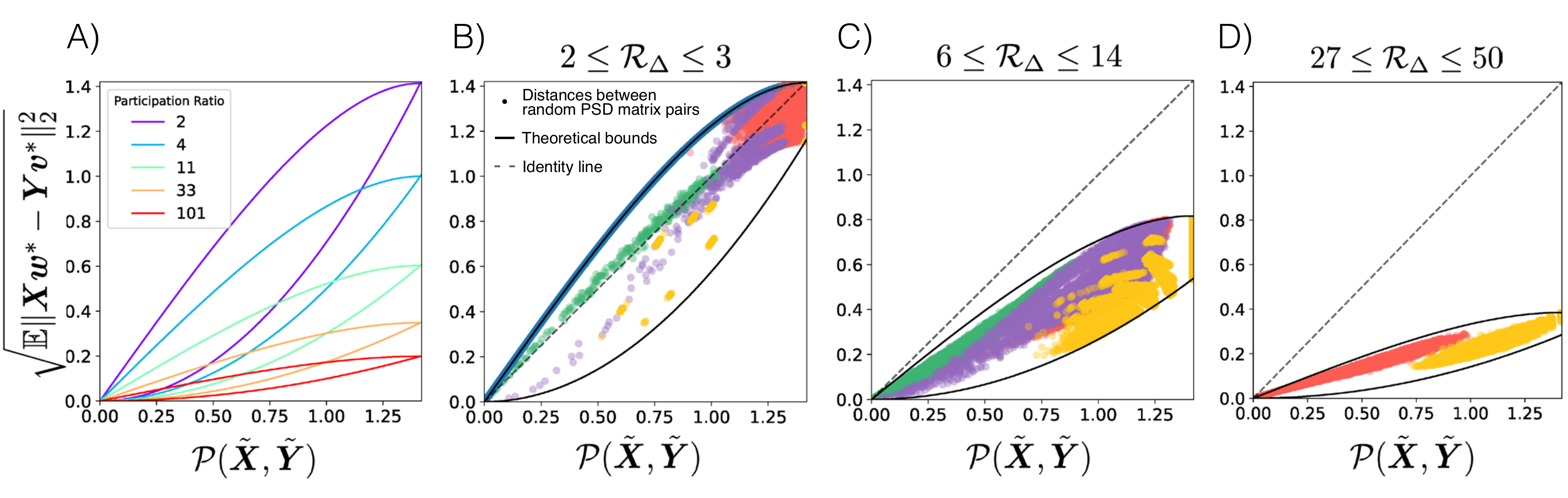}
\centering
\caption{Bounds in \cref{eq:bound_proc} plotted (solid lines) for varying participation ratio $\pr_\Delta$ with $\alpha = \beta = 1$.  A)  Allowed regions (between the solid curves of like color) of Procrustes distance and expected Euclidean distance between decoded signals for different values of participation ratio $\pr_\Delta$. B--D) Allowed regions for particular $\pr_\Delta$ intervals (black solid lines) populated with calculated distances between pairs of randomly sampled positive semi-definite (PSD) matrices of size $50 \times 50$, subsampled to those that have $\pr_\Delta$ in the particular interval (colored points). Different colors represent different random ensembles of positive semi-definite matrix pairs, which are described in \cref{appendix:fig2}.}
\label{fig:bounds}
\end{figure}

The bounds in \cref{eq:bound_proc} and \cref{eq:bound_euclid} reveal an imperfect connection between the Procrustes distance and the average decoding distance, such as the GULP distance.  
We recall that every choice of readout weight normalization $\mbG(\mbX)$ leads to a measure of representational distance $\expect \Vert \mbX \mbw^* - \mbY \mbv^* \Vert^2_2$, and that there is an associated Procrustes distance on the normalized representations $\tilde{\mbX}$ and $\tilde{\mbY}$, $\cP (\tilde{\mbX}, \tilde{\mbY})$.  
These bounds tell us that the average decoding distance $\expect \Vert \mbX \mbw^* - \mbY \mbv^* \Vert^2_2$ and this Procrustes distance constrain each other in a very particular way.  
Namely, when the Procrustes distance between normalized representations is small, then the average decoding distance must also be small.\footnote{The scale of ``small" and ``large" here are determined by $\alpha$, $\beta$, and $\pr_\Delta$, as the range of the Procrustes distance is between $0$ and $ \sqrt{\alpha + \beta}$, while range of the average decoding distance is between 0 and $\frac{\alpha + \beta}{\sqrt{\pr_\Delta}}$. }  
However, the converse is only necessarily true when the participation ratio of the difference of the kernel matrices $\mbK_X - \mbK_Y$ is also small (e.g.\ \cref{fig:bounds}B).  
Indeed, as the dimensionality of $\mbK_X - \mbK_Y$ increases, we see that the Procrustes distance between normalized representations can be large, but the average decoding distance can be relatively small, with the effect becoming more exaggerated as $\pr_\Delta$ becomes large (\cref{fig:bounds}D).  
In \cref{appendix:bounds} we show that for $\alpha = \beta = 1$, the minimum possible participation ratio $\pr_\Delta$ is 2.  
Therefore, \cref{fig:bounds} also illustrates another interesting phenomenon, namely that the upper left-hand corner of these plots is impossible to populate. 
That is, in the case of large expected difference in decoded signals, one can never simultaneously measure a small Procrustes distance between the normalized representations.
Thus we observe quantitatively, as may be intuitive, that the Procrustes distance offers a more strict notion of geometric dissimilarity than the average decoding distance.

\section{Discussion}

Understanding meaningful ways to measure similarities between neural representations  is clearly a very complex problem.  
The literature demonstrates a proliferation of different techniques~\cite{klabunde2023similarity, sucholutsky2023getting}, but an underdeveloped understanding of how these methods relate to each other. 
Less still is known about how representational similarity measures interact with functional similarity or the amount and type of information that is decodable from the representation. 
This paper presents a theoretical framework centered around the linear decoding of information from representations, which allows us to understand some existing popular methods for measuring representational similarity as average decoding similarity or average decoding distance with different choices of weight regularization.
These connections relied on averaging the decoding similarity or decoding distance over a distribution of decoding targets $\mbz$ with $\expect[\mbz \mbz^\top] = \mbI$.
In the future it could be interesting to explore modifying these assumptions.  
For instance, instead of maximizing, minimizing, or taking the expectation over decoding targets, are there potentially interesting sets of fixed decoding targets that make sense when comparing networks in particular contexts? 
Furthermore, we focused on linear regression as a decoding method; a potentially interesting line of future work could be to extend this framework to linear classifiers (e.g. support vector machines or multi-class logistic regression).
Lastly, in this paper we considered quantifying representational similarity across a finite set of $M$ stimulus conditions.  
However, these finite-dimensional approaches can be framed as approximations or estimators for a population version of the problem. 
For example, the framework in \cite{pospisil2024estimating} for the Procrustes distance, or \cite{BoixAdsera2022} for the GULP distance (the plugin estimator of which is a special case of our framework as we saw above).
We outline a framing of this perspective in \cref{sec:asympt-limit}, but a rigorous exploration of the $M \rightarrow \infty$ regime and an analysis of the behavior of estimators for similarity scores in the limited sample regime could be an important topic of future work.

\printbibliography

@article{sucholutsky2023getting,
  title={Getting aligned on representational alignment},
  author={Sucholutsky, Ilia and Muttenthaler, Lukas and Weller, Adrian and Peng, Andi and Bobu, Andreea and Kim, Been and Love, Bradley C and Grant, Erin and Achterberg, Jascha and Tenenbaum, Joshua B and others},
  journal={arXiv preprint:2310.13018},
  year={2023}
}

@article{kriegeskorte2019peeling,
  title={Peeling the onion of brain representations},
  author={Kriegeskorte, Nikolaus and Diedrichsen, J{\"o}rn},
  journal={Annual review of neuroscience},
  volume={42},
  pages={407--432},
  year={2019},
  publisher={Annual Reviews}
}

@article{Bhatia2019,
title = {On the Bures–Wasserstein distance between positive definite matrices},
journal = {Expositiones Mathematicae},
volume = {37},
number = {2},
pages = {165-191},
year = {2019},
issn = {0723-0869},
doi = {https://doi.org/10.1016/j.exmath.2018.01.002},
url = {https://www.sciencedirect.com/science/article/pii/S0723086918300021},
author = {Rajendra Bhatia and Tanvi Jain and Yongdo Lim},
keywords = {Positive definite matrices, Bures distance, Wasserstein metric, Optimal transport, Coupling problem, Fidelity},
}

@article{Gao2015,
title = {On simplicity and complexity in the brave new world of large-scale neuroscience},
journal = {Current Opinion in Neurobiology},
volume = {32},
pages = {148-155},
year = {2015},
note = {Large-Scale Recording Technology (32)},
issn = {0959-4388},
doi = {https://doi.org/10.1016/j.conb.2015.04.003},
url = {https://www.sciencedirect.com/science/article/pii/S0959438815000768},
author = {Peiran Gao and Surya Ganguli},
abstract = {Technological advances have dramatically expanded our ability to probe multi-neuronal dynamics and connectivity in the brain. However, our ability to extract a simple conceptual understanding from complex data is increasingly hampered by the lack of theoretically principled data analytic procedures, as well as theoretical frameworks for how circuit connectivity and dynamics can conspire to generate emergent behavioral and cognitive functions. We review and outline potential avenues for progress, including new theories of high dimensional data analysis, the need to analyze complex artificial networks, and methods for analyzing entire spaces of circuit models, rather than one model at a time. Such interplay between experiments, data analysis and theory will be indispensable in catalyzing conceptual advances in the age of large-scale neuroscience.}
}

@article{Recanatesi2022,
title = {A scale-dependent measure of system dimensionality},
journal = {Patterns},
volume = {3},
number = {8},
pages = {100555},
year = {2022},
issn = {2666-3899},
doi = {https://doi.org/10.1016/j.patter.2022.100555},
url = {https://www.sciencedirect.com/science/article/pii/S266638992200160X},
author = {Stefano Recanatesi and Serena Bradde and Vijay Balasubramanian and Nicholas A. Steinmetz and Eric Shea-Brown},
keywords = {dimensionality, dynamics, multiscale geometry, complex data, data mining, neural activity},
abstract = {Summary
A fundamental problem in science is uncovering the effective number of degrees of freedom in a complex system: its dimensionality. A system’s dimensionality depends on its spatiotemporal scale. Here, we introduce a scale-dependent generalization of a classic enumeration of latent variables, the participation ratio. We demonstrate how the scale-dependent participation ratio identifies the appropriate dimension at local, intermediate, and global scales in several systems such as the Lorenz attractor, hidden Markov models, and switching linear dynamical systems. We show analytically how, at different limiting scales, the scale-dependent participation ratio relates to well-established measures of dimensionality. This measure applied in neural population recordings across multiple brain areas and brain states shows fundamental trends in the dimensionality of neural activity—for example, in behaviorally engaged versus spontaneous states. Our novel method unifies widely used measures of dimensionality and applies broadly to multivariate data across several fields of science.}
}

@article{Dabrowska2021,
author = {Dąbrowska, Paulina and Voges, Nicole and von Papen, Michael and Ito, Junji and Dahmen, David and Riehle, Alexa and Brochier, Thomas and Grün, Sonja},
year = {2021},
month = {05},
pages = {},
title = {On the Complexity of Resting State Spiking Activity in Monkey Motor Cortex},
volume = {2},
journal = {Cerebral Cortex Communications},
doi = {10.1093/texcom/tgab033}
}

@ARTICLE{Yanai1974,
  title     = "Unification of various techniques of multivariate analysis by
               means of generalized coefficient of determination",
  author    = "Yanai, Haruo",
  journal   = "Kodo Keiryogaku",
  publisher = "Behaviormetric Society of Japan",
  volume    =  1,
  number    =  1,
  pages     = "46--54",
  month     =  mar,
  year      =  1974,
  language  = "ja"
}

@article{ramsay1984matrix,
  title={Matrix correlation},
  author={Ramsay, James O and ten Berge, Jos and Styan, George PH},
  journal={Psychometrika},
  volume={49},
  number={3},
  pages={403--423},
  year={1984},
  publisher={Springer}
}

@BOOK{Elewa2004-sb,
  title     = "Morphometrics",
  editor    = "Elewa, Ashraf M T",
  publisher = "Springer",
  edition   =  2004,
  month     =  may,
  year      =  2004,
  address   = "Berlin, Germany",
  language  = "en"
}

@BOOK{Srivastava2016-yq,
  title     = "Functional and shape data analysis",
  author    = "Srivastava, Anuj and Klassen, Eric P",
  publisher = "Springer",
  series    = "Springer Series in Statistics",
  edition   =  1,
  month     =  oct,
  year      =  2016,
  address   = "New York, NY",
  language  = "en"
}

@book{watrous2018,
  title={The theory of quantum information},
  author={Watrous, John},
  year={2018},
  publisher={Cambridge University Press}
}

@InProceedings{Giaffar2024,
  title = 	 {The Effective Number of Shared Dimensions Between Paired Datasets},
  author =       {Giaffar, Hamza and Rull\'{a}n Bux\'{o}, Camille and Aoi, Mikio},
  booktitle = 	 {Proceedings of The 27th International Conference on Artificial Intelligence and Statistics},
  pages = 	 {4249--4257},
  year = 	 {2024},
  editor = 	 {Dasgupta, Sanjoy and Mandt, Stephan and Li, Yingzhen},
  volume = 	 {238},
  series = 	 {Proceedings of Machine Learning Research},
  month = 	 {02--04 May},
  publisher =    {PMLR},
  pdf = 	 {https://proceedings.mlr.press/v238/giaffar24a/giaffar24a.pdf},
  url = 	 {https://proceedings.mlr.press/v238/giaffar24a.html},
}

@Article{Kriegeskorte2021,
author={Kriegeskorte, Nikolaus
and Wei, Xue-Xin},
title={Neural tuning and representational geometry},
journal={Nature Reviews Neuroscience},
year={2021},
month={Nov},
day={01},
volume={22},
number={11},
pages={703-718},
issn={1471-0048},
doi={10.1038/s41583-021-00502-3},
}

@article{lampinen2024learned,
  title={Learned feature representations are biased by complexity, learning order, position, and more},
  author={Lampinen, Andrew Kyle and Chan, Stephanie CY and Hermann, Katherine},
  journal={arXiv preprint arXiv:2405.05847},
  year={2024}
}

@article{kaufman2014cortical,
  title={Cortical activity in the null space: permitting preparation without movement},
  author={Kaufman, Matthew T and Churchland, Mark M and Ryu, Stephen I and Shenoy, Krishna V},
  journal={Nature neuroscience},
  volume={17},
  number={3},
  pages={440--448},
  year={2014},
  publisher={Nature Publishing Group US New York}
}

@article{fusi2016neurons,
  title={Why neurons mix: high dimensionality for higher cognition},
  author={Fusi, Stefano and Miller, Earl K and Rigotti, Mattia},
  journal={Current opinion in neurobiology},
  volume={37},
  pages={66--74},
  year={2016},
  publisher={Elsevier}
}

@article{dujmovic2022some,
  title={Some pitfalls of measuring representational similarity using Representational Similarity Analysis},
  author={Dujmovi{\'c}, Marin and Bowers, Jeffrey S and Adolfi, Federico and Malhotra, Gaurav},
  journal={bioRxiv},
  pages={2022--04},
  year={2022},
  publisher={Cold Spring Harbor Laboratory}
}

@inproceedings{davari2023reliability,
title={Reliability of {CKA} as a Similarity Measure in Deep Learning},
author={MohammadReza Davari and Stefan Horoi and Amine Natik and Guillaume Lajoie and Guy Wolf and Eugene Belilovsky},
booktitle={The Eleventh International Conference on Learning Representations },
year={2023},
url={https://openreview.net/forum?id=8HRvyxc606}
}

@inproceedings{Maheswaranathan2019,
 author = {Maheswaranathan, Niru and Williams, Alex and Golub, Matthew and Ganguli, Surya and Sussillo, David},
 booktitle = {Advances in Neural Information Processing Systems},
 editor = {H. Wallach and H. Larochelle and A. Beygelzimer and F. d\textquotesingle Alch\'{e}-Buc and E. Fox and R. Garnett},
 pages = {},
 publisher = {Curran Associates, Inc.},
 title = {Universality and individuality in neural dynamics across large populations of recurrent networks},
 volume = {32},
 year = {2019}
}

@article{raghu2017svcca,
  title={Svcca: Singular vector canonical correlation analysis for deep learning dynamics and interpretability},
  author={Raghu, Maithra and Gilmer, Justin and Yosinski, Jason and Sohl-Dickstein, Jascha},
  journal={Advances in neural information processing systems},
  volume={30},
  year={2017}
}

@inproceedings{BoixAdsera2022,
    author = {Boix-Adsera, Enric and Lawrence, Hannah and Stepaniants, George and Rigollet, Philippe},
    booktitle = {Advances in Neural Information Processing Systems},
    editor = {S. Koyejo and S. Mohamed and A. Agarwal and D. Belgrave and K. Cho and A. Oh},
    pages = {7115--7127},
    publisher = {Curran Associates, Inc.},
    title = {GULP: a prediction-based metric between representations},
    volume = {35},
    year = {2022}
}

@article{williams2021generalized,
  title={Generalized shape metrics on neural representations},
  author={Williams, Alex H and Kunz, Erin and Kornblith, Simon and Linderman, Scott},
  journal={Advances in Neural Information Processing Systems},
  volume={34},
  pages={4738--4750},
  year={2021}
}

@article{klabunde2024resi,
  title={ReSi: A Comprehensive Benchmark for Representational Similarity Measures},
  author={Klabunde, Max and Wald, Tassilo and Schumacher, Tobias and Maier-Hein, Klaus and Strohmaier, Markus and Lemmerich, Florian},
  journal={arXiv preprint arXiv:2408.00531},
  year={2024}
}

@article{Kriegeskorte2019,
title = {Interpreting encoding and decoding models},
journal = {Current Opinion in Neurobiology},
volume = {55},
pages = {167-179},
year = {2019},
note = {Machine Learning, Big Data, and Neuroscience},
issn = {0959-4388},
doi = {https://doi.org/10.1016/j.conb.2019.04.002},
url = {https://www.sciencedirect.com/science/article/pii/S0959438818301004},
author = {Nikolaus Kriegeskorte and Pamela K Douglas},
}

@misc{alain2017understanding,
title={Understanding intermediate layers using linear classifier probes},
author={Guillaume Alain and Yoshua Bengio},
year={2017},
url={https://openreview.net/forum?id=HJ4-rAVtl},
booktitle={International Conference on Learning Representations},
}

@BOOK{KernelsBook,
  title     = "Learning with Kernels: Support Vector Machines, Regularization, Optimization, and Beyond",
  author    = "Scholkopf, Bernhard and Smola, Alexander J",
  publisher = "MIT Press",
  series    = "Adaptive Computation and Machine Learning Series",
  month     =  jun,
  year      =  2001,
  address   = "London, England",
  language  = "en"
}

@inproceedings{pospisil2024estimating,
title={Estimating Shape Distances on Neural Representations with Limited Samples},
author={Dean A Pospisil and Brett W. Larsen and Sarah E Harvey and Alex H Williams},
booktitle={International Conference on Learning Representations},
year={2024},
url={https://openreview.net/forum?id=kvByNnMERu}
}

@book{GPBook,
  title={Gaussian processes for machine learning},
  author={Williams, Christopher KI and Rasmussen, Carl Edward},
  volume={2},
  number={3},
  year={2006},
  publisher={MIT press Cambridge, MA}
}

@article{klabunde2023similarity,
  title={Similarity of neural network models: A survey of functional and representational measures},
  author={Klabunde, Max and Schumacher, Tobias and Strohmaier, Markus and Lemmerich, Florian},
  journal={arXiv preprint arXiv:2305.06329},
  year={2023}
}

@inproceedings{Ding2021,
 author = {Ding, Frances and Denain, Jean-Stanislas and Steinhardt, Jacob},
 booktitle = {Advances in Neural Information Processing Systems},
 editor = {M. Ranzato and A. Beygelzimer and Y. Dauphin and P.S. Liang and J. Wortman Vaughan},
 pages = {1556--1568},
 publisher = {Curran Associates, Inc.},
 title = {Grounding Representation Similarity Through Statistical Testing},
 volume = {34},
 year = {2021}
}

@book{kendall2009shape,
  title={Shape and shape theory},
  author={Kendall, David George and Barden, Dennis and Carne, Thomas K and Le, Huiling},
  year={2009},
  publisher={John Wiley \& Sons}
}

@book{dryden2016statistical,
  title={Statistical shape analysis: with applications in R},
  author={Dryden, Ian L and Mardia, Kanti V},
  year={2016},
  publisher={John Wiley \& Sons}
}

@InProceedings{kornblith2019,
  title = 	 {Similarity of Neural Network Representations Revisited},
  author =       {Kornblith, Simon and Norouzi, Mohammad and Lee, Honglak and Hinton, Geoffrey},
  booktitle = 	 {International Conference on Machine Learning},
  pages = 	 {3519--3529},
  year = 	 {2019},
  editor = 	 {Chaudhuri, Kamalika and Salakhutdinov, Ruslan},
  volume = 	 {97},
  series = 	 {Proceedings of Machine Learning Research},
  month = 	 {09--15 Jun},
  publisher =    {PMLR},
}

@Article{Cao2022,
    author={Cao, Rosa},
    title={Putting representations to use},
    journal={Synthese},
    year={2022},
    month={Apr},
    day={15},
    volume={200},
    number={2},
    pages={151},
    issn={1573-0964},
    doi={10.1007/s11229-022-03522-3},
}

@InProceedings{Harvey2024,
  title = 	 {Duality of Bures and Shape Distances with Implications for Comparing Neural Representations},
  author =       {Harvey, Sarah E. and Larsen, Brett W. and Williams, Alex H.},
  booktitle = 	 {Proceedings of UniReps: the First Workshop on Unifying Representations in Neural Models},
  pages = 	 {11--26},
  year = 	 {2024},
  editor = 	 {Fumero, Marco and Rodolá, Emanuele and Domine, Clementine and Locatello, Francesco and Dziugaite, Karolina and Mathilde, Caron},
  volume = 	 {243},
  series = 	 {Proceedings of Machine Learning Research},
  month = 	 {15 Dec},
  publisher =    {PMLR},
  pdf = 	 {https://proceedings.mlr.press/v243/harvey24a/harvey24a.pdf},
}

@article{cortes2012,
    title={Algorithms for learning kernels based on centered alignment},
    author={Cortes, Corinna and Mohri, Mehryar and Rostamizadeh, Afshin},
    journal={The Journal of Machine Learning Research},
    volume={13},
    pages={795--828},
    year={2012},
    publisher={JMLR. org}
}

@article{cloos2024differentiable,
  title={Differentiable optimization of similarity scores between models and brains},
  author={Cloos, Nathan and Li, Moufan and Siegel, Markus and Brincat, Scott L and Miller, Earl K and Yang, Guangyu Robert and Cueva, Christopher J},
  journal={arXiv preprint arXiv:2407.07059},
  year={2024}
}
\clearpage

\begin{appendices}
\appendix

\section{Appendix}
\label{appendix}

\subsection{Bounds on the Procrustes distance in terms of the Euclidean distance}\label{appendix:bounds}

Here we derive upper and lower bounds on the Procrustes distance between neural representations $\mbX \in \reals^{M \times N_X}$ and $\mbY \in \reals^{M \times N_Y}$ in terms of the Euclidean distance between linear kernel matrices $\mbK_X = \mbX \mbX^\top$ and $\mbK_Y = \mbY \mbY^\top$. 
This result is applied in the main text to relate the expected Euclidean distance between decoded signals $\expect \Vert \mbX \mbw^* - \mbY \mbv^* \Vert^2_2 = \Vert \tilde{\mbX} \tilde{\mbX}^\top - \tilde{\mbY} \tilde{\mbY}^\top  \Vert_F^2$ to the Procrustes distance $\cP (\tilde{\mbX}, \tilde{\mbY})$.
Tildes are omitted in this section for clarity, but we note that this result is applied in the main text to the normalized representations $\tilde{\mbX}$ and $\tilde{\mbY}$.

We will make use of the equivalence between the Procrustes distance between representation matrices $\mbX$ and $\mbY$ and a notion of distance on the space of positive semi-definite kernel matrices $\mbK_X = \mbX \mbX^\top$ and $\mbK_Y = \mbY \mbY^\top$ called the Bures distance, defined as 

\begin{equation}
\label{eq:bures-distance-definition}
d_B(\mbK_X, \mbK_Y) = \sqrt{\Tr[\mbK_X] + \Tr[\mbK_Y] - 2 \Tr \left [ \left (\mbK_X^{1/2} \mbK_Y \mbK_X^{1/2} \right )^{1/2} \right] } .
\end{equation}

The third trace term on the right hand side of \cref{eq:bures-distance-definition} is often called the \emph{fidelity}, defined as 

\begin{equation}
 \mathcal{F}(\mbK_X, \mbK_Y) = \Tr \left [ \left (\mbK_X^{1/2} \mbK_Y \mbK_X^{1/2} \right )^{1/2} \right].
\end{equation}

\subsubsection{Lower bound}

For the lower bound, we are inspired by the Fuchs-van-de-Graaf inequalites that are used in quantum information theory to assert an approximate equivalence between two measures of quantum state similarity, the \emph{trace distance} and the \emph{fidelity}.
This approach is relevant from a mathematical perspective for our purposes, since quantum states are represented by positive semi-definite matrices normalized to have trace 1. 
Here, we use a similar approach as that often used to derive the Fuchs-van de Graaf inequalites to instead relate the Euclidean distance and the Bures distance on positive semidefinite matrices, while also relaxing the trace 1 normalization. 

We will make use of the following identity. 

\begin{lemma}

\begin{equation}
\| \alpha u u^{\dag} - \beta v v^{\dag} \|_* = \sqrt{(\alpha + \beta)^2 - 4 \alpha \beta \lvert \langle u,v \rangle\lvert^2 }
\end{equation}

for all unit vectors $u, v$ and all non-negative real numbers $\alpha$ and $\beta$. 
\end{lemma}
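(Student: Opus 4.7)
The plan is to reduce the trace-norm computation to a $2\times 2$ spectral problem. Since $M := \alpha u u^\dagger - \beta v v^\dagger$ is Hermitian with range contained in $\operatorname{span}(u,v)$, it has at most two nonzero eigenvalues $\lambda_1,\lambda_2\in\reals$, and $\|M\|_{*}$ equals $|\lambda_1|+|\lambda_2|$. So it is enough to determine the unordered pair $\{\lambda_1,\lambda_2\}$, which I would do through its elementary symmetric functions rather than by diagonalizing $M$ directly.

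The first step is to compute the two trace invariants $\Tr M$ and $\Tr M^2$, which are the power sums $\lambda_1+\lambda_2$ and $\lambda_1^2+\lambda_2^2$. Using $\|u\|=\|v\|=1$ and the cyclicity of the trace, these evaluate to $\alpha-\beta$ and $\alpha^2+\beta^2-2\alpha\beta\,|\langle u,v\rangle|^2$ respectively; the only nontrivial cross term that appears is $\Tr(uu^\dagger vv^\dagger)=|\langle u,v\rangle|^2$. Combining the power sums via $2\lambda_1\lambda_2 = (\lambda_1+\lambda_2)^2-(\lambda_1^2+\lambda_2^2)$ then gives the product
\[
\lambda_1\lambda_2 \;=\; -\,\alpha\beta\bigl(1-|\langle u,v\rangle|^2\bigr).
\]

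The crucial observation at this point is that $\lambda_1\lambda_2\le 0$: by Cauchy--Schwarz, $|\langle u,v\rangle|\le 1$, and $\alpha,\beta\ge 0$ by hypothesis. This forces $\lambda_1$ and $\lambda_2$ to have opposite signs (or one of them to vanish), so $|\lambda_1||\lambda_2| = -\lambda_1\lambda_2 = \alpha\beta(1-|\langle u,v\rangle|^2)$. Substituting,
\[
\|M\|_{*}^2 \;=\; (|\lambda_1|+|\lambda_2|)^2 \;=\; (\lambda_1^2+\lambda_2^2) + 2|\lambda_1||\lambda_2| \;=\; (\alpha+\beta)^2 - 4\alpha\beta\,|\langle u,v\rangle|^2,
\]
and taking square roots yields the claimed identity.

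The main (and only) conceptual step, which I expect to be the ``hard part'' despite its brevity, is the sign argument for $\lambda_1\lambda_2$; without it, the trace-norm cannot be recovered from the power sums alone, since $|\lambda_1|+|\lambda_2|\ne\sqrt{(\lambda_1+\lambda_2)^2}$ in general. The degenerate cases $|\langle u,v\rangle|=1$ (where $M$ collapses to a rank-one or zero matrix) and $\alpha\beta=0$ are automatically consistent with the resulting formula, so no separate case analysis is required.
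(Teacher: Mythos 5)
Your proof is correct, and it takes a cleaner route than the paper's. The paper diagonalizes $\mbM = \alpha u u^{\dag} - \beta v v^{\dag}$ explicitly: it posits eigenvectors of the form $cu + dv$, solves the resulting quadratic for $c$, and obtains the closed-form eigenvalues $\lambda_{\pm} = \tfrac{\beta-\alpha}{2} \pm \tfrac{1}{2}\sqrt{(\alpha+\beta)^2 - 4\alpha\beta|\langle u,v\rangle|^2}$ before summing their absolute values (checking that $\lambda_-\le 0$ so that $|\lambda_+|+|\lambda_-| = \lambda_+-\lambda_-$). You instead bypass the eigenvectors entirely and recover $|\lambda_1|+|\lambda_2|$ from the two trace invariants $\Tr \mbM = \alpha-\beta$ and $\Tr \mbM^2 = \alpha^2+\beta^2-2\alpha\beta|\langle u,v\rangle|^2$, using the product $\lambda_1\lambda_2 = -\alpha\beta(1-|\langle u,v\rangle|^2)\le 0$ to justify $|\lambda_1||\lambda_2| = -\lambda_1\lambda_2$. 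You are right that this sign observation is the one essential step, and it is the exact counterpart of the paper's check that $\lambda_-\le 0$. What your approach buys is brevity and robustness (no quadratic to solve, no division by $\langle v,u\rangle$, which the paper's derivation of $c$ implicitly assumes is nonzero); what the paper's approach buys is the explicit eigenvalue formula \cref{eq:rank1-eigenvalues}, which it reuses later in the appendix to show that rank-one, trace-one kernels achieve participation ratio exactly $2$. Your argument would need the eigenvalues themselves to serve that later purpose, but as a proof of this lemma it is complete.
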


\begin{proof}  
First we recognize that the nuclear norm of a matrix can be calculated by summing the singular values of that matrix.  
Furthermore, if that matrix is Hermitian, the singular values are the absolute values of the eigenvalues.  
Define $\mbM = \alpha u u^{\dag} - \beta v v^{\dag}$.
Our matrix $M$ is Hermitian and has at most two eigenvalues, so we will look for expressions for these in terms of $\alpha$ and $\beta$.
Note that if $u$ and $v$ are the same unit vector, then $\mbM$ has rank 1, so we expect the eigenvalues will depend also on the inner product $\langle u,v \rangle$.

The eigenvectors of $\mbM$ will be of the form $\psi = c u + d v$ for some scalar $c$ and $d$.
By direct calculation:
\begin{equation}
\begin{split}
\mbM\psi =& \lambda \psi \\
\mbM(cu + dv) =& (\alpha c + \alpha d \langle u,v \rangle) u - (\beta c \langle v,u \rangle + \beta d )v\\
=& \lambda (c u + d v)
\end{split}
\end{equation}

Therefore, we see that the eigenvalues must satisfy both

\begin{equation}\label{eq:lambdas}
\lambda = \frac{\alpha c + \alpha d \langle u,v \rangle}{c} \:\:\: \text{and} \:\:\:  \lambda = -\frac{\beta c \langle v,u \rangle + \beta d}{d}
\end{equation}

Setting these equal to each other and solving the resulting quadratic for $c$ gives:

\begin{equation}\label{eq:cvalue}
c = \frac{-d (\alpha + \beta) \pm d \sqrt{(\alpha + \beta)^2 - 4 \alpha \beta \lvert \langle v,u \rangle \lvert^2 }}{2 \beta \langle v,u \rangle }.
\end{equation}

So the two eigenvectors are proportional to

\begin{equation}
    \psi_{\pm} =  \bigg{(}\frac{(\alpha + \beta)}{2} \mp  \frac{1}{2} \sqrt{(\alpha + \beta)^2 - 4 \alpha \beta \lvert \langle v,u \rangle \lvert^2 }\bigg{)} u -  \beta \langle v,u \rangle v.
\end{equation}

To find the eigenvalues, we combine  \cref{eq:lambdas} and \cref{eq:cvalue}, arriving at 

\begin{equation}\label{eq:rank1-eigenvalues}
\lambda_{\pm} = \frac{(\beta - \alpha)}{2} \pm \frac{1}{2} \sqrt{(\alpha + \beta)^2 - 4 \alpha \beta \lvert \langle v,u \rangle \lvert^2}.
\end{equation}

Finally, to calculate $\| \mbM\|_*$ we simply sum the magnitudes of these eigenvalues. 
By inspecting the term under the square root, we see that $\lambda_-$ is always negative.  
Therefore, $|\lambda_+| + |\lambda_-| = \lambda_+ - \lambda_-$, and we find

\begin{equation}
\begin{split}
\| \mbM\|_1 =& \lvert \lambda_+ \lvert + \lvert \lambda_- \lvert \\
\implies \| \alpha u u^{\dag} - \beta v v^{\dag} \|_1 =& \sqrt{(\alpha + \beta)^2 - 4 \alpha \beta \lvert \langle v,u \rangle \lvert^2}
\end{split}
\end{equation}
\end{proof}

We now find a lower bound on the Bures distance between kernel matrices $\mbK_X$ and $\mbK_Y$ in terms of the Euclidean distance between the same matrices, $\|\mbK_X - \mbK_Y \|_F^2 $.
There are many choices of $\mbX$ and $\mbY$ multiply to the same positive semi-definite kernel matrices via $\mbK_X = \mbX \mbX^\top$ and $\mbK_Y = \mbY \mbY^\top$.
Define real positive scalars $\alpha = \Tr \mbX \mbX^\top$ and $\beta = \Tr \mbY \mbY^\top$, and the unit vectors 

\begin{equation}
    \hat{x} = \frac{\text{vec}(\mbX)}{\|\text{vec}(\mbX)\|_2}  \:\:\:\: \text{and} \:\:\:\: \hat{y} = \frac{\text{vec}(\mbY)}{\|\text{vec}(\mbY)\|_2} 
\end{equation}

where $\text{vec}(\mbX)$ is the vectorization linear transformation converting the $M \times N_X$-dimensional matrix $\mbX$ into an $MN_X$-dimensional vector.   
We will assume that the representations have been zero-padded such that $N_X = N_Y$.

By Uhlmann's theorem \cite{watrous2018}, there exists some choice of $\mbX$ and $\mbY$, such that 

\begin{equation}
    \lvert \langle  \sqrt{\alpha} \hat{x}, \sqrt{\beta} \hat{y}  \rangle \lvert = \mathcal{F}(\mbK_X,\mbK_Y)
\end{equation}

One can see this by writing the fidelity as a nuclear norm of $\mbX^\top \mbY$ and considering the variational form of the nuclear norm as a maximization of the Hilbert-Schmidt inner product over unitary transformations (see \cite{Harvey2024} for a more explicit discussion of this).

Now use our identity from earlier with $u = \hat{x}$ and $v =\hat{y} $.  

\begin{equation}
\begin{split}
    \| \alpha \hat{x} \hat{x}^{\top} - \beta \hat{y} \hat{y}^{\top} \|_1 =& \sqrt{(\alpha + \beta)^2 - 4 \alpha \beta \lvert \langle \hat{x},\hat{y} \rangle \lvert^2}\\
    =& \sqrt{(\alpha + \beta)^2 - 4 \mathcal{F}(\mbK_X,\mbK_Y)^2}
    \end{split}
\end{equation}

The operation that takes $\alpha \hat{x} \hat{x}^{\top}$ and $\beta \hat{y} \hat{y}^{\top}$ to $\mbK_X$ and $\mbK_Y$, respectively, is called the partial trace.  
By the monotonicity of the nuclear norm under partial tracing \cite{watrous2018}, we have: 

\begin{equation}\label{eq:nucnorm-inequality}
    \| \mbK_X-\mbK_Y \|_*  \le \| \alpha \hat{x} \hat{x}^{\top} - \beta \hat{y} \hat{y}^{\top} \|_* = \sqrt{(\alpha + \beta)^2 - 4 \mathcal{F}(\mbK_X,\mbK_Y)^2}.
\end{equation}

With equality when $\mbK_X$ and $\mbK_Y$ are rank-1, that is $\mbK_X = \alpha \hat{x} \hat{x}^{\top} $ and $\mbK_Y = \beta \hat{y} \hat{y}^{\top}$.

Lastly, we rewrite the nuclear norm $\| \mbK_X-\mbK_Y \|_*$ in terms of the Euclidean distance and the \emph{participation ratio} of the matrix $\mbK_X - \mbK_Y$.
The participation ratio of a matrix $\mbA \in \mathbb{R}^{M\times N}$ is defined as

\begin{equation}\label{eq:pr-def}
    \mathcal{R}(\mbA) = \frac{\| \mbA \|_{1}^{2} }{\| \mbA \|_F^2} = \frac{(\sum_i \sigma_i)^2}{\sum_i \sigma_i^2},
\end{equation}

so we have 

\begin{equation}\label{eq:nucnorm-pr}
    \| \mbK_X-\mbK_Y \|_* = \sqrt{\pr_\Delta} \| \mbK_X-\mbK_Y \|_F
\end{equation}

where $\pr_\Delta = \pr(\tilde{\mbX} \tilde{\mbX}^\top - \tilde{\mbY}\tilde{\mbY}^\top)$.
We can also use the definition of the Bures distance \cref{eq:bures-distance-definition} to replace $\mathcal{F}(\mbK_X,\mbK_Y) = \frac{1}{2}(\alpha + \beta - d_B(\mbK_X,\mbK_Y)^2)$. 
Making these substitutions into \cref{eq:nucnorm-inequality}, and solving for $d_B(\mbK_X,\mbK_Y)^2$, we find:

\begin{equation}\label{eq:gen-lower-bound}
    d_B(\mbK_X,\mbK_Y)^2 \ge (\alpha + \beta) - \sqrt{(\alpha + \beta )^2 - \pr_\Delta \|\mbK_X-\mbK_Y \|_F^2}.
\end{equation}

This bound is saturated when the inequalty in \cref{eq:nucnorm-inequality} is an equality, or when $\mbK_X$ and $\mbK_Y$ are both rank 1.  

For a bound that is independent of the participation ratio, we could replace $\pr_\Delta$ with its ostensible minimal value of 1.
When $\mbK_X$ and $\mbK_Y$ are normalized to have trace 1, we would then have 

\begin{equation}\label{eq:tr1-lower-bound}
    d_B(\mbK_X,\mbK_Y)^2 \ge 2 - \sqrt{4 -  \|\mbK_X-\mbK_Y \|_F^2} \approx \frac{1}{4} \|\mbK_X - \mbK_Y \|_F^2.
\end{equation}

However, in this case that $\Tr\mbK_X = \Tr \mbK_Y = 1 $, we can actually arrive at a tighter lower bound than \cref{eq:tr1-lower-bound}, because we can say something interesting about the minimal participation ratio of matrices of the form $\mbK_X - \mbK_Y$.  
It turns out that under this normalization, the participation ratio is lower bounded by $2$, with the minimum of $2$ achieved when $\mbK_X$ and $\mbK_Y$ are rank 1 (also in this case the bound in \cref{eq:gen-lower-bound} will be saturated).  

\begin{lemma}
For positive semidefinite matrices $\mbK_X$ and $\mbK_Y$ with $\Tr \mbK_X = \Tr \mbK_Y$, 
\begin{equation}\label{eq:minimal-pr}
\mathcal{R}(\mbK_X-\mbK_Y) \ge 2.
\end{equation}
\end{lemma}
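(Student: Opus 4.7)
The plan is to reduce the problem to an elementary inequality on the eigenvalues of the Hermitian matrix $\mbA := \mbK_X - \mbK_Y$. Since $\mbA$ is Hermitian, its singular values equal the absolute values of its eigenvalues, so writing the eigenvalues as $\lambda_1,\dots,\lambda_n$ one has
\begin{equation*}
\|\mbA\|_1 = \sum_i |\lambda_i| \qquad \text{and} \qquad \|\mbA\|_F^2 = \sum_i \lambda_i^2.
\end{equation*}
The key structural input is the hypothesis $\Tr \mbK_X = \Tr \mbK_Y$, which gives $\Tr \mbA = \sum_i \lambda_i = 0$.

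Next I would split the spectrum of $\mbA$ by sign. Let $S_+ = \sum_{\lambda_i>0}\lambda_i$ and $S_- = \sum_{\lambda_i<0}|\lambda_i|$. The trace condition forces $S_+ = S_-$, hence $\|\mbA\|_1 = 2S_+$. For the denominator I would apply the elementary bound that for any finite set of nonnegative numbers $a_1,\dots,a_k$ one has $\sum_j a_j^2 \le \bigl(\sum_j a_j\bigr)^2$ (proved by expanding the square and discarding the cross terms). Applied separately to the positive and negative parts of the spectrum, this yields
\begin{equation*}
\sum_i \lambda_i^2 \;=\; \sum_{\lambda_i>0}\lambda_i^2 \;+\; \sum_{\lambda_i<0}\lambda_i^2 \;\le\; S_+^2 + S_-^2 \;=\; 2S_+^2.
\end{equation*}

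Combining these two observations gives
\begin{equation*}
\mathcal{R}(\mbA) \;=\; \frac{(\sum_i |\lambda_i|)^2}{\sum_i \lambda_i^2} \;=\; \frac{4S_+^2}{\sum_i \lambda_i^2} \;\ge\; \frac{4S_+^2}{2S_+^2} \;=\; 2,
\end{equation*}
which is exactly \cref{eq:minimal-pr}. I would also record that equality in $\sum_j a_j^2 \le (\sum_j a_j)^2$ occurs precisely when at most one of the $a_j$ is nonzero, so equality in the final bound occurs exactly when $\mbA$ has a single positive and a single negative eigenvalue; when $\Tr\mbK_X=\Tr\mbK_Y=1$ this corresponds to $\mbK_X$ and $\mbK_Y$ both being rank-one, matching the saturation regime discussed after \cref{eq:gen-lower-bound}.

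There is not really a hard step here: the entire argument hinges on recognizing that the trace constraint symmetrizes the positive and negative spectral mass. The one thing to be careful about is remembering that the participation ratio is defined via singular values (not eigenvalues) in \cref{eq:pr-def}, so the proof should explicitly invoke Hermiticity of $\mbA$ before identifying $\|\mbA\|_1$ with $\sum_i|\lambda_i|$.
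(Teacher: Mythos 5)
Your proof is correct and follows essentially the same route as the paper's: both hinge on the observation that $\Tr(\mbK_X-\mbK_Y)=0$ forces the positive and negative spectral masses to balance ($S_+=S_-$, so $\|\mbA\|_1=2S_+$), and then bound the sum of squared eigenvalues by $2S_+^2$. Your use of the elementary inequality $\sum_j a_j^2 \le \bigl(\sum_j a_j\bigr)^2$ applied to each sign class is a cleaner packaging of the cross-term manipulation the paper carries out explicitly, and it yields the same equality condition (one positive and one negative eigenvalue, i.e.\ a rank-2 difference).
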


\begin{proof}
By noting that $\mbK_X-\mbK_Y$ is Hermitian, we rewrite \cref{eq:pr-def} as

\begin{equation}\label{eq:pr-eigen}
    \mathcal{R}(\mbK_X-\mbK_Y) = \frac{(\sum_i |\lambda_i|)^2 }{\sum_i |\lambda_i|^2} 
\end{equation}

where $\lambda_i$ is the $i$th eigenvalue of the matrix difference $\mbK_X-\mbK_Y$.  
Next, we recognize that if $\Tr \mbK_X = \Tr \mbK_Y$, then $\Tr(\mbK_X-\mbK_Y) = 0$ and therefore the eigenvalues of $\mbK_X-\mbK_Y$ sum to 0.  
The set of eigenvalues $\{\lambda_1, \lambda_2, ..., \lambda_M  \} $ can then always be partitioned into a set of positive eigenvalues $\{ \lambda^+_1, \lambda^+_2, ..., \lambda^+_p \}$, and a set of negative eigenvalues $\{ \lambda^-_1, \lambda^-_2, ..., \lambda^-_q \}$, with $p+q = M$.  
The negative eigenvalues must balance the positive eigenvalues in magnitude, so we  must have

\begin{equation}
    \sum_{i = 1}^p \lambda_i^+ = \sum_{j = 1}^q | \lambda_j^- |.
\end{equation}

We can rewrite \cref{eq:pr-eigen} in terms of sums over the positive and negative eigenvalue sets:

\begin{equation}
\setlength{\jot}{10pt}
\begin{split}\label{eq:big-pr-calc}
 \mathcal{R}(\mbK_X-\mbK_Y) &= \frac{4 (\sum_{i = 1}^p\lambda_i^+)^2 }{\sum_{i = 1}^p (\lambda_i^+)^2 + \sum_{j = 1}^q | \lambda_j^- |^2} \\
 &= \frac{4 \big{[}\sum_{i = 1}^p (\lambda_i^{+})^2 + \sum_{j\neq k}^p \lambda_j^+ \lambda_k^+\big{]}}{ (\sum_{i = 1}^p \lambda_i^+)^2 - \sum_{j\neq k}^p \lambda_j^+ \lambda_k^+ + (\sum_{i = 1}^q |\lambda_i^-|)^2 - \sum_{j\neq k}^q |\lambda_j^-| |\lambda_k^-|}\\
 &= \frac{4 \big{[}\sum_{i = 1}^p (\lambda_i^{+})^2 + \sum_{j\neq k}^p \lambda_j^+ \lambda_k^+\big{]}}{2[ (\sum_{i = 1}^p (\lambda_i^{+})^2) + \frac{1}{2}( \sum_{j\neq k}^p \lambda_j^+ \lambda_k^+  - \sum_{j\neq k}^q |\lambda_j^-| |\lambda_k^-|)]}\\
 &= 2\: \bigg{[} \frac{ \sum_{i = 1}^p (\lambda_i^{+})^2 + \sum_{j\neq k}^p \lambda_j^+ \lambda_k^+}{ \sum_{i = 1}^p (\lambda_i^{+} )^2 + \frac{1}{2}( \sum_{j\neq k}^p \lambda_j^+ \lambda_k^+  - \sum_{j\neq k}^q |\lambda_j^-| |\lambda_k^-|)} \bigg{]}\\
\end{split}
\end{equation}

By inspection, the term in brackets is $\ge 1$ (this can be seen by considering the signs and relative magnitudes of each of the summed terms).
So we have

\begin{equation}
    \mathcal{R}(\mbK_X-\mbK_Y) \ge 2.
\end{equation}

\end{proof}

The inequality \cref{eq:minimal-pr} is saturated when $\mbK_X-\mbK_Y$ is rank 2 and thus only has two equal magnitude and opposite signed eigenvalues, as can be seen from the first line of \cref{eq:big-pr-calc}.  
We can also use \cref{eq:rank1-eigenvalues} to see that when $\mbK_X$ and $\mbK_Y$ are rank 1 and trace 1, we have

\begin{equation}
    \mathcal{R}(\mbK_X-\mbK_Y) = \frac{\| \mbK_X-\mbK_Y \|_{1}^{2} }{\| \mbK_X-\mbK_Y \|_F^2}  =\frac{ (\alpha + \beta)^2 - 4 \alpha \beta \lvert \langle v,u \rangle \lvert^2}{\frac{1}{2} [(\alpha + \beta)^2 - 4 \alpha \beta \lvert \langle v,u \rangle \lvert^2]} = 2.
\end{equation}

For $\mbK_X$ and $\mbK_Y$ normalized to have trace 1, we can then state a tighter bound than \cref{eq:tr1-lower-bound} using the minimal participation ratio in \cref{eq:minimal-pr}.  We have

\begin{equation}
        d_B(\mbK_X,\mbK_Y)^2 \ge 2 - \sqrt{4 - 2 \|\mbK_X-\mbK_Y \|_F^2}.
\end{equation}

\subsubsection{Upper bound}

We can bound the Bures distance using its variational form \cite{Bhatia2019},

\begin{equation}\label{eq:bures-variational}
d_B^2(\mbK_X,\mbK_Y) = \min_{\mbQ \in \mathcal{O}(M)} \|\mbK_X^{1/2} - \mbK_Y^{1/2}\mbQ \|_F^2 \le  \|\mbK_X^{1/2} - \mbK_Y^{1/2} \|_F^2
\end{equation}

The Powers-St\o rmer inequality implies that $\|\mbK_X^{1/2} - \mbK_Y^{1/2} \|_F^2 \le \| \mbK_X-\mbK_Y \|_*$, so 

\begin{equation}\label{eq:bures-nucnorm}
    d_B^2(\mbK_X,\mbK_Y) \le \| \mbK_X-\mbK_Y \|_*.
\end{equation}

Expressing the nuclear norm in terms of the participation ratio of $\mbK_X - \mbK_Y$ using \cref{eq:nucnorm-pr}, we have an upper bound on the Bures distance:

\begin{equation}
    d_B^2(\mbK_X,\mbK_Y) \le \sqrt{\mathcal{R}_\Delta}\| \mbK_X-\mbK_Y \|_F.
\end{equation}

The inequality in \cref{eq:bures-variational} is saturated when the optimal orthogonal transformation that aligns $\mbK_X^{1/2}$ and $\mbK_Y^{1/2}$ is the identity. 
This occurs when $\mbK_X$ and $\mbK_Y$ are simultaneously diagonalizable.  
The Powers-St\o rmer inequality in \cref{eq:bures-nucnorm} is saturated when $\mbK_X$ and $\mbK_Y$ have eigenvalues that only take values in $\{ 0, 1 \}$.

\subsection{Figure 2 details}\label{appendix:fig2}

\Cref{fig:bounds} panels (B-D) show the allowed regions of Procrustes distance and expected Euclidean distance between decoded signals for representations normalized to have $\Tr \tilde{\mbX}\tilde{\mbX}^\top = \Tr \tilde{\mbY} \tilde{\mbY}^\top = 1$, with varying participation ratio $\pr_\Delta$.  
We have populated the plots with points representing the respective distances between randomly sampled pairs of positive semi-definite matrices. 
These positive semi-definite matrices represent normalized kernel matrices $\tilde{\mbX}\tilde{\mbX}^\top$ and $\tilde{\mbY} \tilde{\mbY}^\top$. 
We have seen in the main text that the squared Euclidean distance between these kernel matrices is equivalent to the expected squared Euclidean distance between decoded signals, $\expect \Vert \mbX \mbw^* - \mbY \mbv^* \Vert^2_2 = \Vert \tilde{\mbX} \tilde{\mbX}^\top - \tilde{\mbY} \tilde{\mbY}^\top  \Vert_F^2$, when the distribution of decoding targets satisfies $\expect[\mbz \mbz^\top] = \mbI$.
On the other hand, \cite{Harvey2024} demonstrated how the Procrustes distance $\cP (\tilde{\mbX}, \tilde{\mbY})$ is equivalent to the Bures distance \cref{eq:bures-distance-definition}, $d_B(\tilde{\mbX} \tilde{\mbX}^\top, \tilde{\mbY} \tilde{\mbY}^\top)$.  
Therefore, both distances on these plots can be calculated from samples of positive semi-definite matrix pairs.  

Each color of points in \cref{fig:bounds} (B-D) represents a different random ensemble of $M \times M$ PSD matrices with $M=50$, which are then binned into one of the three plots if the participation ratio $\pr_\Delta$ lies in the respective range indicated above each plot panel.  

\begin{itemize}

    \item Pink points are generated by sampling eigenvalues for the two PSD matrices from a Dirichlet distribution with concentration parameters logarithmically spaced between $10^{-3}$ and $10^3$.  
Matrices of eigenvectors are then randomly sampled from the set of orthogonal matrices.

    \item Green points were generated by sampling a random matrix $\mbA$ of size $M\times r$ with each element drawn from the uniform distribution $\mathcal{U}[0,1]$.  
A uniform random integer $r$ was selected between 1 and 24.
A PSD matrix was generated by multiplying $\mbA \mbA^\top$.  
Another PSD matrix was then generated by adding a randomly weighted matrix of the same dimension with standard normal distributed entries $\mbN$ to $\mbA$, so $\mbB = \mbA + \epsilon  \mbN $, where $\epsilon \sim \mathcal{U}[0,1]$.  
The distance between $ \frac{1}{\Tr \mbA \mbA^\top} \mbA \mbA^\top$ and $\frac{1}{\Tr \mbB \mbB^\top} \mbB \mbB^\top$ is then computed using these two distance metrics.

    \item Blue points are seen to only occupy \cref{fig:bounds} (B), as these correspond to distances between matrices of rank 1 and trace 1.
As we saw earlier in this appendix, in this case the participation ratio of the difference between these two PSD matrices is always 2.  
The rank 1 matrices were sampled using the same method as the green points above, but setting $r = 1$.  

    \item Purple points were generated by sampling one matrix from a Wishart distribution $W_M(r,\mbI)$, with degrees of freedom $r$ chosen as a uniformly random integer between $1$ and $50$.  This matrix was normalized to have trace 1.
The second matrix was generated by adding to the first matrix another matrix drawn from the Wishart distribution $W_M(n, \mbI)$, with $n$ a randomly selected integer between 1 and 10.  Both matrices are normalized to have trace 1. 

    \item Yellow points represent distances between PSD matrices that are simultaneously diagonalizable.  These were generated by sampling $M$ values uniformly between $0$ and $1$ for each matrix, but and setting these values to 0 if they fall below the threshold 0.6.  
    The resultant values were then normalized to sum to 1, which became the eigenvalues for the two PSD matrices. 
A matrix of eigenvectors was then randomly selected from the set of orthogonal matrices.    

\end{itemize}

\subsection{Generalizations and Interpretations as $M \rightarrow \infty$}\label{sec:asympt-limit}

In the main text, we considered quantifying representational similarity across a finite set of $M$ stimulus conditions.
Further, when considering the average distance in decoding performance, we have assumed that $\expect[\mbz \mbz^\top] = \mbI$.
Here, we briefly discuss how to relax both constraints, leaving a full investigation to future work.
Readers will need familiarity with the basic principles of kernel methods and gaussian processes in machine learning (see e.g. \cite{KernelsBook,GPBook}) to follow along with certain results in this section.

To begin, we must revisit and refine our theoretical framework developed in \cref{sec:setup-linear-decoding}.
Under \cref{proposition:avg-case}, we treated the neural response matrices, $\mbX$ and $\mbY$, as constants while we treated the decoder targets, $\mbz$, as a random variable.
In this section we will treat $\mbX$, $\mbY$ and $\mbz$ as joint random variables, as has been done in prior work \cite{pospisil2024estimating,BoixAdsera2022}.
Specifically, let $\cU$ denote the space of possible stimulus inputs (e.g. the space of natural images or grammatically correct English sentences).
We are interested in quantifying similarity between two neural networks, $f : \cU \mapsto \reals^{N_X}$ and $g : \cU \mapsto \reals^{N_Y}$, across this stimulus space.
Let $P_{\mbu}$ denote a distribution with support on $\cU$, and let $\mbu_1, \dots, \mbu_M$ denote independent samples from $P_{\mbu}$.
As before, we collect the network responses into matrices $\mbX \in \reals^{M \times N_X}$ and $\mbY \in \reals^{M \times N_Y}$, which we assume to be mean centered, $\expect[f(\mbu)] = \expect[g(\mbu)] = \mb{0}$.
The rows of these matrices are given by $\mbx_i = f(\mbu_i)$ and $\mby_i = g(\mbu_i)$, and we assume that $\|f(\mbu_i) \|_2 \le \infty$ for all inputs indexed by $i \in \{1, ..., M \}$.  
This is a common assumption that can be interpreted in the neuroscience sense as assuming neural firing rates are bounded from above (they cannot become arbitrarily large).

Finally, we define a decoding task as a function $\eta : \cU \mapsto \reals$.
For $M$ finite samples (as considered in the main text), the decoder target is the vector with elements consisting of $\eta$ evaluated at each sampled input: ${\mbz = \begin{bmatrix} \eta(\mbu_1) & \dots & \eta(\mbu_M) \end{bmatrix}^\top}$.
As in \cref{sec:cka-interpretation-section,sec:shape-distance-connection}, we will be interested in characterizing the \textit{average} decoder alignment over multiple tasks.
In the main text we considered $\expect_{\mbz} \langle \mbX \mbw^*, \mbY \mbv^* \rangle$, but we would like to now generalize this to an expectation of an inner product between two \emph{functions} over the input space. 
We consider $\eta$ to be drawn from a Gaussian process, $\eta \sim \cG \cP(q)$, with a covariance operator $q: \cU \times \cU \mapsto \reals$.
Intuitively, $q$ defines the difficulty of the distribution over regression tasks by specifying smoothness with respect to the input space $\cU$.
For example, the popular squared exponential or radial basis function (RBF) kernel:
\begin{equation}
q(\mbu, \mbu^\prime) = \exp \left ( -\frac{\Vert \mbu - \mbu^\prime \Vert_2^2}{2 \gamma} \right )
\end{equation}
comes equipped with a length scale parameter $\gamma > 0$ that determines the smoothness of functions sampled $\eta \sim \cG \cP(q)$.
In the limit that $\gamma \rightarrow 0$ we get the Kronecker delta function:
\begin{equation}
\delta(\mbu, \mbu^\prime) = \begin{cases} 1 & \mbu = \mbu^\prime \\ 0 & \mbu \neq \mbu^\prime 
\end{cases}
\end{equation}

In the remainder of this section, we show that the average decoding similarity converges to an expected inner product between kernels defined by the two neural systems.
Specifically, let us define $k : \cU \times \cU \mapsto \reals$ and $h : \cU \times \cU \mapsto \reals$ as follows:

\begin{equation}
\label{eq:kernels-defined-by-neural-net}
k(\mbu, \mbu') = f(\mbu)^\top \mbG(f)^{-1} f(\mbu') \qquad \text{and} \qquad h(\mbu, \mbu') = g(\mbu)^\top \mbG(g)^{-1} g(\mbu')
\end{equation}

where we have generalized \cref{eq:G-alpha-beta} as:

\begin{equation}
\mbG(f) = a \expect[f(\mbu) f(\mbu)^\top] + b \mbI
\end{equation} 

We denote by $L^2(\mathcal{U})$ the space of $L^2$-integrable functions from the set $\mathcal{U}$ to $\reals$. 
Consider the integral operators $\mathcal{K}: L^2(\mathcal{U})\rightarrow L^2(\mathcal{U}) $ and $\mathcal{H}: L^2(\mathcal{U})\rightarrow L^2(\mathcal{U})$ associated with the kernels in \cref{eq:kernels-defined-by-neural-net}.
We define the functions $\eta_\mathcal{K}(\mbu)$ and $\eta_\mathcal{H}(\mbu)$ as these operators acting on the decoding task function $\eta(\mbu)$:

\begin{equation}\label{eq:Kint-def}
    [\mathcal{K}\eta](\mbu) = \int k(\mbu, \mbu') \eta(\mbu') p(\mbu') d\mbu' \equiv \eta_\mathcal{K}(\mbu) 
\end{equation}

\begin{equation}\label{eq:Hint-def}
    [\mathcal{H}\eta](\mbu) = \int h(\mbu, \mbu') \eta(\mbu') p(\mbu') d\mbu' \equiv \eta_\mathcal{H}(\mbu) 
\end{equation}

The functions $\eta_\mathcal{K}(\mbu)$ and $\eta_\mathcal{H}(\mbu)$ are analogous to the  vectors $\mbX \mbw^* = \mbK_{X} \in \reals^{M}$ and $\mbY \mbv^* = \mbK_{Y} \in \reals^{M}$, and can be thought of as the `decoded' functions.  
We will define the infinite-dimensional decoding similarity as the inner product:

\begin{equation}
   S =  \langle \eta_{\mathcal{K}}, \eta_{\mathcal{H}} \rangle_{L^2} = \int_{\mathcal{U}}\eta_{\mathcal{K}}(\mbu) \eta_{\mathcal{H}}(\mbu) p(\mbu) d\mbu 
\end{equation}

Then we have, using the definitions in \cref{eq:Kint-def} and \cref{eq:Hint-def},

\begin{equation}
    S =\int_{\mathcal{U}} \int_{\mathcal{U}} \int_{\mathcal{U}} k(\mbu, \mbu') \eta(\mbu') \eta(\mbu'') h(\mbu, \mbu'')\, p(\mbu) d\mbu \,p(\mbu') d\mbu'\, p(\mbu'') d\mbu''.
\end{equation}

This integral measures the similarity between two neural systems $f$ and $g$ for a particular choice of decoding task function $\eta$.  
However, following the main text, we would like to take the expectation over some ensemble of decoding task functions $\eta$.

\begin{equation}
       \expect_{\eta}[S] = \expect_{\eta} \langle \eta_{\mathcal{K}}, \eta_{\mathcal{H}} \rangle_{L^2}
\end{equation}

Assuming the conditions to invoke Fubini's theorem regarding changing the ordering of integration are met, we have 

\begin{equation}
    \expect_{\eta}[S] =\int_{\mathcal{U}} \int_{\mathcal{U}} \int_{\mathcal{U}} k(\mbu, \mbu') q(\mbu', \mbu'') h(\mbu, \mbu'')\, p(\mbu) d\mbu \,p(\mbu') d\mbu'\, p(\mbu'') d\mbu''.
\end{equation}

We now recognize this integral as this as the trace of a composition of integral operators, 

\begin{equation}\label{eq:exp-sim-inf}
    \expect_{\eta}[S] = \Tr [\mathcal{K}\mathcal{Q} \mathcal{H}] 
\end{equation}

where $\mathcal{Q}: L^2(\mathcal{U}) \rightarrow L^2(\mathcal{U})$ is the covariance operator associated with the Gaussian process covariance function $q$.  
This quantity is finite, since all of the operators involved are Hilbert-Schmidt and trace-class.  

We now would like to show that the finite-dimensional notion of ``decoding similarity" introduced in the main text (appropriately normalized) can be thought of as a plug-in estimator of the quantity \cref{eq:exp-sim-inf}, and that this estimator converges to $\expect_{\eta}[S]$ as $M \rightarrow \infty$.

A `plug-in' estimator for this could be to use the Gram matrices for each kernel and matrix multiplication to approximate the integrals.
First, we sample $M$ examples from the input distribution $\mbu \sim P_{\mbu}$ and construct the Gram matrices $\mbQ_{ij} = q(\mbu_i, \mbu_j)$ and

\begin{equation}\label{eq:Kkernel}
    \mbK_{ij} = k(\mbu_i, \mbu_j) = f(\mbu_i)^\top \mbG(f)^{-1} f(\mbu_j) 
\end{equation}
\begin{equation}\label{eq:Hkernel}
\mbH_{ij} = h(\mbu_i, \mbu_j) = g(\mbu_i)^\top \mbG(g)^{-1} g(\mbu_j).
\end{equation}

%
%

Then we have, by the strong law of large numbers,
\begin{align*}
     \int_{\mathcal{U}} \int_{\mathcal{U}} \int_{\mathcal{U}} k(\mbu, \mbu') q(\mbu', \mbu'') h(\mbu, \mbu'')\, p(\mbu) d\mbu \,p(\mbu') d\mbu'\, p(\mbu'') d\mbu'' =& \lim_{M \rightarrow \infty} \frac{1}{M^3} \sum_{ijk} K_{ij} Q_{jk} H_{ki}
\end{align*}

or 

\begin{equation}\label{eq:strong-lln}
\Tr [\mathcal{K}\mathcal{Q} \mathcal{H}]    = \lim_{M \rightarrow \infty}\frac{1}{M^3} \Tr \mbK \mbQ \mbH.
\end{equation}

However, the `true' Gram matrices generated from the kernels in \cref{eq:Kkernel} and \cref{eq:Hkernel} in practice are also estimated, as the $N\times N$ regularization matrices $\mbG(f)$ and $\mbG(g)$ are potentially estimated from the same $M$ samples from $P_{\mbu}$. 
We estimate $\mbG(f)$ and $\mbG(g)$ using the plug-in estimates of the $N\times N$ covariance matrices $\expect[f(\mbu) f(\mbu)^\top]$ and $\expect[g(\mbu) g(\mbu)^\top]$.

\begin{align}
    \mbG(\mbX) = \frac{\alpha}{M} \mbX^\top \mbX + \beta \mbI \xrightarrow[M\rightarrow \infty]{}
 \alpha \expect[f(\mbu) f(\mbu)^\top] + \beta \mbI = \mbG(f)
\end{align}

\begin{align}
    \mbG(\mbY) = \frac{\alpha}{M} \mbY^\top \mbY + \beta \mbI \xrightarrow[M\rightarrow \infty]{}
 \alpha \expect[g(\mbu) g(\mbu)^\top] + \beta \mbI = \mbG(g)
\end{align}

More precisely, it can be shown that the empirical regularization matrices $\mbG(\mbX)$ and $\mbG(\mbY)$ concentrates around $\mbG(f)$ and $\mbG(g)$ in operator norm:

\begin{equation}
    \|  \mbG(\mbX) - \mbG(f)\|_{op} \xrightarrow[M\rightarrow \infty]{} 0
\end{equation}

\begin{equation}
    \|  \mbG(\mbY) - \mbG(g)\|_{op} \xrightarrow[M\rightarrow \infty]{} 0
\end{equation}

Bounds on the rate of this convergence in operator norm can be obtained using elementary matrix concentration inequalities. 
We now must argue that $\mbG(\mbX)^{-1}$ concentrates to $\mbG(f)^{-1}$ and similarly $\mbG(\mbY)^{-1}$ concentrates to $\mbG(g)^{-1}$.  
Since the matrices $\mbG(\mbX)$ and $\mbG(f)$ are symmetric and positive definite, 
we have 

\begin{align}\label{eq:opnorm_invs}
    \| \mbG(\mbX)^{-1} \|_{op} &=  \max_{v: \|v \| \le 1} \| (\frac{\alpha}{M} \mbX^\top \mbX + \beta \mbI)^{-1} v \|_2 = \frac{1}{\lambda_{min}(\mbG(\mbX)) } \le \frac{1}{\beta} \\
    \| \mbG(f)^{-1} \|_{op} &=  \max_{v: \|v \| \le 1} \| (\alpha \expect[f(\mbu) f(\mbu)^\top] + \beta \mbI)^{-1} v \|_2 = \frac{1}{\lambda_{min}(\mbG(f)) } \le \frac{1}{\beta} 
\end{align}

where the last inequality on the right hand side is a consequence of the Courant-Fischer theorem and the positive-semidefiniteness of the covariance matrix. 
Similar relations hold for $\mbG(\mbY)$ and $\mbG(g)$.
Now we would like to study the quantity

\begin{equation}
    \|  (\mbG(\mbX)^{-1} - \mbG(f)^{-1})\|_{op} = \|[(\frac{\alpha}{M} \mbX^\top \mbX + \beta \mbI)^{-1} - (\alpha \expect[f(\mbu) f(\mbu)^\top] + \beta \mbI)^{-1} ]  \|_{op} 
\end{equation}

Multiplying the quantity inside the norm by identity and invoking the bound in \cref{eq:opnorm_invs}, we have

\begin{align}
     \|  (\mbG(\mbX)^{-1} - \mbG(f)^{-1})\|_{op} &\le \frac{1}{\beta} \| \mbI - (\frac{\alpha}{M} \mbX^\top \mbX + \beta \mbI) (\alpha \expect[f(\mbu) f(\mbu)^\top] + \beta \mbI)^{-1}  \|_{op}
\end{align}

Pulling a factor of $(\alpha \expect[f(\mbu) f(\mbu)^\top] + \beta \mbI)^{-1}$ out of everything in the norm on the right hand side:

\begin{align*}
     \|  (\mbG(\mbX)^{-1} - \mbG(f)^{-1})\|_{op} \le \frac{1}{\beta} \| \big{[} (\alpha \expect[f(\mbu)& f(\mbu)^\top] + \beta \mbI) \\
     & - (\frac{\alpha}{M} \mbX^\top \mbX + \beta \mbI) \big{]} (\alpha \expect[f(\mbu) f(\mbu)^\top] + \beta \mbI)^{-1}  \|_{op} 
\end{align*}

and using the submultiplicative property of the operator norm,

\begin{equation*}
      \|  (\mbG(\mbX)^{-1} - \mbG(f)^{-1})\|_{op} \le \frac{|\alpha|}{\beta} \|  \expect[f(\mbu) f(\mbu)^\top] - \frac{1}{M} \mbX^\top \mbX \|_{op} \| (\alpha \expect[f(\mbu) f(\mbu)^\top] + \beta \mbI)^{-1}  \|_{op}. 
\end{equation*}

Using \cref{eq:opnorm_invs} again, we have 

\begin{equation}
    \|  (\mbG(\mbX)^{-1} - \mbG(f)^{-1})\|_{op} \le \frac{|\alpha|}{\beta^2} \|  \expect[f(\mbu) f(\mbu)^\top] - \frac{1}{M} \mbX^\top \mbX \|_{op}
\end{equation}

or 

\begin{equation}\label{eq:inv-op-norm}
    \|  (\mbG(\mbX)^{-1} - \mbG(f)^{-1})\|_{op} \le \frac{|\alpha|}{\beta^2} \|  \mbG(\mbX) - \mbG(f) \|_{op}
\end{equation}

with an analogous bound holding for $\mbG(\mbY)$ and $\mbG(g)$.

With this knowledge, we define $\hat{\mbK}$ and $\hat{\mbH}$ as the empirical Gram matrices constructed using the estimates $\mbG(\mbX)$ and $\mbG(\mbY)$, and conclude that, for every $\{i, j \}$,

\begin{equation}
    \hat{\mbK}_{ij} = f(\mbu_i)^\top \mbG(\mbX)^{-1} f(\mbu_j) \xrightarrow[M\rightarrow \infty]{} f(\mbu_i)^\top \mbG(f)^{-1} f(\mbu_j) = \mbK_{ij}
\end{equation}

\begin{equation}
    \hat{\mbH}_{ij} = g(\mbu_i)^\top \mbG(\mbY)^{-1} g(\mbu_j)^\top \xrightarrow[M\rightarrow \infty]{} g(\mbu_i)^\top \mbG(g)^{-1} g(\mbu_j) = \mbH_{ij}. 
\end{equation}

where we recognize the matrices $\hat{\mbK}$ and $\hat{\mbH}$ as $M \mbK_{X}$ and $M \mbK_Y$ in the main text.

Lastly, we would now like to study 

\begin{equation}
    \bigg{|}\frac{1}{M^3} \Tr \hat{\mbK} \mbQ \hat{\mbH} - \Tr \mathcal{KQH} \bigg{|} 
\end{equation}

taking note that the quantity $\frac{1}{M^3} \Tr \hat{\mbK} \mbQ \hat{\mbH}$ is a scaled version of \cref{eq:proposition-avg-case} in the main text.
The triangle inequality implies a relationship with the trace of the true Gram matrices:

\begin{equation}\label{eq:triangle-ineq}
    \bigg{|}\frac{1}{M^3} \Tr \hat{\mbK} \mbQ \hat{\mbH} - \Tr \mathcal{KQH} \bigg{|} \le \bigg{|}\frac{1}{M^3} \Tr \hat{\mbK} \mbQ \hat{\mbH} - \frac{1}{M^3} \Tr \mbK \mbQ \mbH \bigg{|} + \bigg{|}\frac{1}{M^3} \Tr \mbK \mbQ \mbH - \Tr \mathcal{KQH} \bigg{|}.
\end{equation}

The first term on the right hand side can be bounded:

\begin{align*}
    \bigg{|}\frac{1}{M^3} \Tr \hat{\mbK} \mbQ \hat{\mbH} - \frac{1}{M^3} \Tr \mbK \mbQ \mbH \bigg{|}  \le & \frac{1}{M^3}\sum_{ijk} \big{|} \hat{\mbK}_{ij} \mbQ_{jk} \hat{\mbH}_{ki} - \mbK_{ij} \mbQ_{jk} \mbH_{ki}\big{|} \\   
    = & \frac{1}{M^3}\sum_{ijk} \big{|} \mbQ_{jk} \big{|} \big{|}  \hat{\mbK}_{ij} \hat{\mbH}_{ki} - \mbK_{ij} \mbH_{ki}\big{|} \\
    = & \frac{1}{M^3}\sum_{ijk} \big{|} \mbQ_{jk} \big{|} \big{|}  (\hat{\mbK}_{ij} - \mbK_{ij}) \hat{\mbH_{ki}} +  \mbK_{ij} (\hat{\mbH}_{ki} - \mbH_{ki}) \big{|} \\
    =& \frac{1}{M^3}\sum_{ijk} \big{|} \mbQ_{jk} \big{|} \bigg{(} \big{|}  \hat{\mbK}_{ij} - \mbK_{ij} \big{|}\big{|}\hat{\mbH}_{ki}\big{|} +  \big{|}\mbK_{ij}\big{|}\big{|} \hat{\mbH}_{ki} - \mbH_{ki} \big{|} \bigg{)}.
\end{align*}

Using the definitions of $\hat{\mbK}_{ij}$, $\mbK_{ij}$, $\mbH_{ij}$, and $\hat{\mbH_{ij}}$, 

\begin{align*}
    \bigg{|}\frac{1}{M^3} \Tr \hat{\mbK} \mbQ \hat{\mbH} - \frac{1}{M^3} \Tr \mbK \mbQ \mbH \bigg{|} \le \frac{1}{M^3} \|  (\mbG(\mbX)^{-1} - \mbG(f)^{-1})\|_{op}    \sum_{ijk} \big{|} \mbQ_{jk} \big{|}  \big{|}\hat{\mbH}_{ki}\big{|} \| f(\mbu_i)\| \| f(\mbu_j) \| \\
    + \frac{1}{M^3} \|  (\mbG(\mbY)^{-1} - \mbG(g)^{-1})\|_{op}    \sum_{ijk} \big{|} \mbQ_{jk} \big{|}  \big{|}\mbK_{ij}\big{|} \| g(\mbu_i)\| \| g(\mbu_j) \| .
\end{align*}

Since $\big{|} \mbQ_{jk} \big{|}$, $\big{|}\hat{\mbH}_{ki}\big{|}$, and $\big{|}\mbK_{ij}\big{|}$ are assumed to be finite for all $i, j, k \in \{1, ... ,M \}$, and the norms of the neural responses $\| f(\mbu_i)\|$ and $\| g(\mbu_i)\|$ can be assumed to be bounded\footnote{Similar to the treatment in \cite{pospisil2024estimating}, this could be interpreted as a resource constraint on the neural responses.} for all $i \in \{1, ... ,M \}$, we can conclude by referencing \cref{eq:inv-op-norm} that the right hand side approaches $0$ as $M \rightarrow \infty$.  
This argument can be made precise by considering the rate of convergence of $\hat{\mbK}_{ij} \rightarrow \mbK_{ij}$ and $\hat{\mbH}_{ij} \rightarrow \mbH_{ij}$ that is inherited from the concentration of $\mbG(\mbX) \rightarrow \mbG(f)$ and $\mbG(\mbY) \rightarrow \mbG(g)$.

The second term on the right hand side of \cref{eq:triangle-ineq} is precisely the convergence of a sum over `true' Gram matrices to the trace of a composition of integral operators described in \cref{eq:strong-lln}, so this term also approaches $0$ as $M \rightarrow \infty$.    

Putting it all together

\begin{equation}
\frac{1}{M^3} \Tr \hat{\mbK} \mbQ \hat{\mbH}  \xrightarrow[M\rightarrow \infty]{}  \Tr [\mathcal{K}\mathcal{Q} \mathcal{H}] =\expect_{\eta}[S].
\end{equation}

where we can recognize the plug-in estimate as 

\begin{equation}
    \frac{1}{M^3} \Tr \hat{\mbK} \mbQ \hat{\mbH} = \frac{1}{M} \Tr \mbK_X \mbK_z \mbK_Y = \frac{1}{M}\expect \langle \mbX \mbw^*, \mbY \mbv^* \rangle.
\end{equation}

in the notation used in the main text (compare with \cref{eq:proposition-avg-case}).  

\end{appendices}

\end{document}